\newtheorem{lemma}{Lemma}[section]
\newtheorem{thm}{Theorem}[section]
\newcommand{\subjectto}{\ensuremath{\mathrm{s.t.}\ }}
\newcommand{\mt}{\ensuremath{\mathsf{T}}}
\newcommand{\m}[1]{\ensuremath{\mathrm{\mathbf{#1}}}}
\newcommand{\s}[1]{\ensuremath{#1}}
\newcommand{\rbr}[1]{\ensuremath{\left(#1\right)}}
\newcommand{\rbrs}[1]{\ensuremath{(#1)}}
\newcommand{\cbrs}[1]{\ensuremath{\{#1\}}}
\newcommand{\mathsc}[1]{{\normalfont\textsc{#1}}}
\newcommand{\OPT}{\mathsc{opt}}
\newcommand{\opt}{\mathsc{opt}}
\newcommand{\LB}{\mathsc{LB}}
\newcommand{\UB}{\mathsc{UB}}
\algnewcommand{\TabComment}[2]{\tabto{#1} $\triangleright$ #2}
\ificcvfinal\pagestyle{empty}\fi
\begin{document}

\title{Learning for Active 3D Mapping}

\author{Karel Zimmermann, Tom{\'a}{\v s} Pet{\v r}{\'\i}{\v c}ek, Vojt{\v e}ch {\v S}alansk{\'y}, and Tom{\'a}{\v s} Svoboda\\
Czech Technical University in Prague, Faculty of Electrical Engineering\\
{\tt\small zimmerk@fel.cvut.cz}
}

\maketitle

\thispagestyle{empty}
\begin{abstract}
We propose an active 3D mapping method for depth sensors, which allow individual control of depth-measuring rays, such as the newly emerging solid-state lidars.
The method simultaneously (i) learns to reconstruct a dense 3D occupancy map from sparse depth measurements, and (ii) optimizes the reactive control of depth-measuring rays.
To make the first step towards the online control optimization, we propose a fast prioritized greedy algorithm, which needs to update its cost function in only a small fraction of possible rays.
The approximation ratio of the greedy algorithm is derived.
An experimental evaluation on the subset of the KITTI dataset demonstrates significant improvement in the 3D map accuracy when learning-to-reconstruct from sparse measurements is coupled with the optimization of depth measuring rays.
\end{abstract}

\section{Introduction}

In contrast to rotating lidars, the SSL uses an optical phased array as a transmitter of depth measuring light pulses. Since the built-in electronics can independently steer pulses of light by shifting its phase as it is projected through the array, the SSL can focus its attention on the parts of the scene important for the current task. Task-driven reactive control steering hundreds of thousands of rays per second using only an on-board computer is a challenging problem, which calls for highly efficient parallelizable algorithms. As a first step towards this goal, we propose an active mapping method for SSL-like sensors, which simultaneously (i) learns to \emph{reconstruct a dense 3D voxel-map} from sparse depth measurements and (ii) optimize the reactive \emph{control of depth-measuring rays}, see Figure~\ref{fig:diagram}.
The proposed method is evaluated on a subset of the KITTI dataset~\cite{Geiger-2013-IJRR}, where sparse SSL measurements are artificially synthesized from captured lidar scans,
and compared to a state-of-the-art 3D reconstruction approach~\cite{Choy-2016-ECCV}.

\begin{figure}[t]\label{fig:diagram}
	\begin{center}
		\includegraphics[width=1\linewidth]{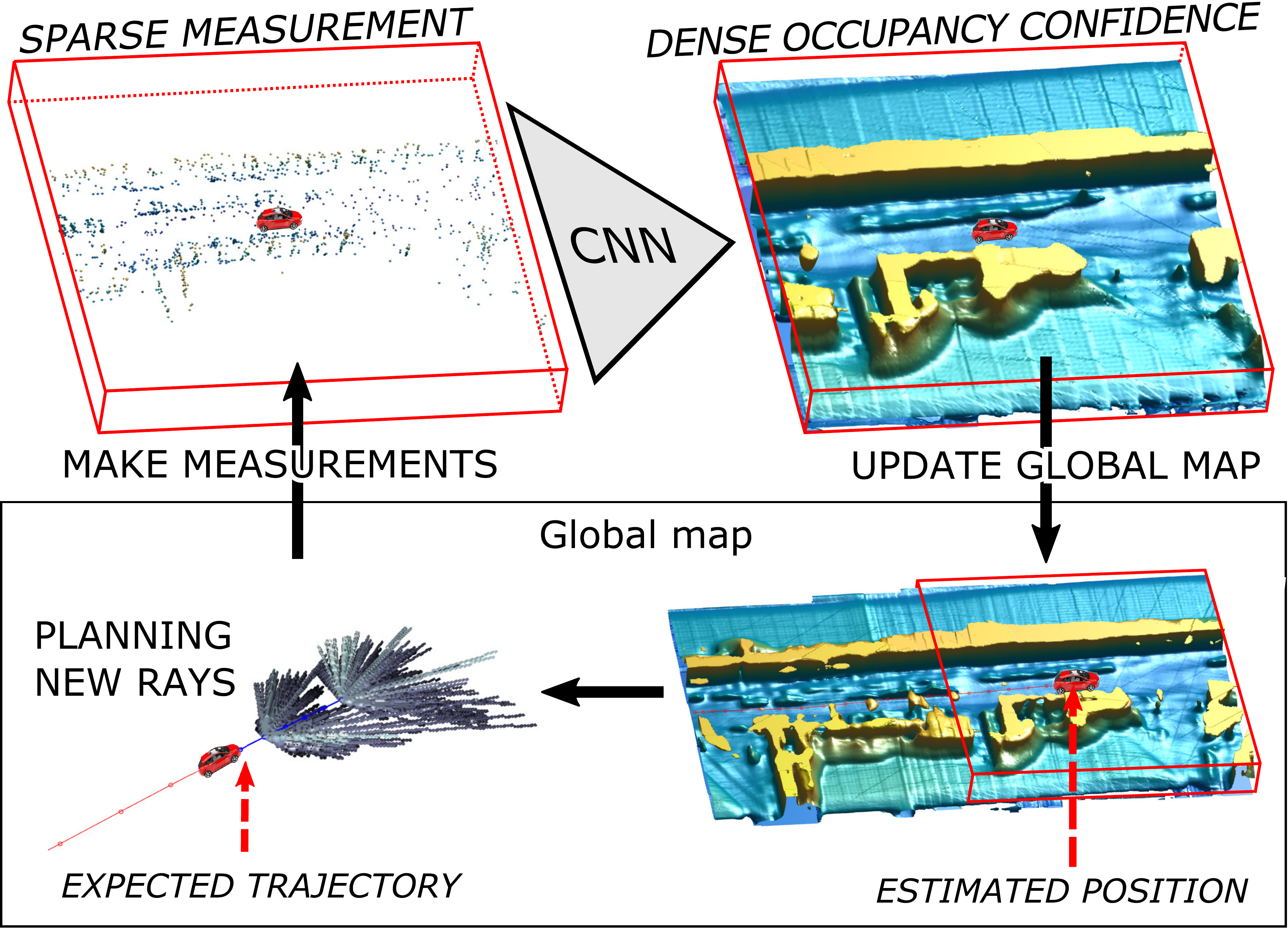}
	\end{center}
	\caption{\textbf{Active 3D mapping with Solid State Lidar.}
		Iteratively learned deep convolutional network reconstructs local dense occupancy map from sparse depth measurements.
		The local map is registered to a global occupancy map, which in turn serves as an input for the optimization of depth-measuring rays along the expected vehicle trajectory.
		The dense occupancy maps are visualized as isosurfaces. See \texttt{https://www.youtube.com/watch?v=KNex0zjeGYE}}
\end{figure}

The main contribution of this paper lies in proposing a computationally tractable approach for very high-dimensional active perception task, which couples learning of the 3D reconstruction with the optimization of depth-measuring rays. Unlike other approaches such as active object detection~\cite{Jayaraman-2016-ECCV} or segmentation~\cite{Mishra-2012-TPAMI}, SSL-like reactive control has significantly higher dimensionality of the state-action space, which makes a direct application of unsupervised reinforcement learning~\cite{Jayaraman-2016-ECCV} prohibitively expensive. 
Keeping the on-board reactive control in mind, we propose prioritized greedy optimization of depth measuring rays, which in contrast to a na{\" i}ve greedy algorithm re-evaluates only $1/500$ rays in each iteration. We derive the approximation ratio of the proposed algorithm.


\ifx\false
Active perception has been widely studied in many robotics applications ranging from exploration and active SLAM~\cite{MartinezCantin-2009-AR} to active object detection~\cite{Jayaraman-2016-ECCV} and segmentation~\cite{Mishra-2012-TPAMI}. In contrast to these applications, the active SSL-mapping has significantly higher dimensionality of the action space, which makes a direct application of known approaches such as unsupervised reinforcement learning~\cite{Jayaraman-2016-ECCV} prohibitively expensive. Unlike the active SLAM, we are mostly interested in a local 3D map, which allows for reactive control of the vehicle, rather than the global map which would require global motion rectifications such as loop closures. In such a short horizon, the motion estimated from odometry, IMU and GPS is sufficiently accurate, therefore localization is not an issue. In contrast to the discriminative voxel reconstruction approaches, the map being reconstructed has also significantly higher dimensionality.
Nevertheless we follow the main paradigm, which achieved state-of-the-art performance in the most of the active perception tasks: \emph{discriminative learning of the target perception task coupled with the active component}. 
\fi


The 3D mapping is handled by an iteratively learned convolution neural network (CNN), as CNNs proved their superior performance in~\cite{Choy-2016-ECCV,Wu-2015-CVPR}. 
The iterative learning procedure stems from the fact that both (i) the directions in which the depth should be measured and (ii) the weights of the 3D reconstruction network are unknown.
We initialize the learning procedure by selecting depth-measuring rays randomly to learn an initial 3D mapping network which estimates occupancy of each particular voxel.
Then, using this network, depth-measuring rays along the expected vehicle trajectory can be planned based on the expected reconstruction (in)accuracy in each voxel.
To reduce the training-planning discrepancy, the mapping network is re-learned on optimized sparse measurements and the whole process is iterated until validation error stops decreasing.

\section{Previous work}





High performance of image-based models is demonstrated in \cite{Su-2015-ICCV}, where a CNN pooling results from multiple rendered views outperforms commonly used 3D shape descriptors in object recognition task.
Qi \etal~\cite{Qi-2016-CVPR} compare several volumetric and multi-view network architectures and propose an anisotropic probing kernel to close the performance gap between the two approaches.
Our network architecture uses a similar design principle.

%
Choy \etal~\cite{Choy-2016-ECCV} proposed a unified approach for single and multi-view 3D object reconstruction which employs a recurrent neural architecture.
Despite providing competitive results in the object reconstruction domain, the architecture is not suitable for dealing with high-dimensional outputs due to its high memory requirements and would need significant modifications to train with full-resolution maps which we use.
We provide a comparison of this method to ours in Sec. \ref{sec:choy}, in a limited setting.


Model-fitting methods such as \cite{Shen-2012-TOG,Sung-2015-TOG,Rock-2015-CVPR} rely on a manually-annotated dataset of models and assume that objects can be decomposed into a predefined set of parts.
Besides that these methods are suited mostly for man-made objects of rigid structure, fitting of the models and their parts to the input points is computationally very expensive; e.g., minutes per input for \cite{Shen-2012-TOG,Sung-2015-TOG}.
Decomposition of the scene into plane primitives as in \cite{Monszpart-2015-TOG} does not scale well with scene size (quadratically due to candidate pairs) and could not most likely deal with the level of sparsity we encounter.

Geometrical and physical reasoning comprising stability of objects in the scene is used by Zheng \etal~\cite{Zheng-2013-CVPR} to improve object segmentation and 3D volumetric recovery.
Their assumption of objects being aligned with coordinate axes which seems unrealistic in practice.
Moreover, it is not clear how to incorporate learned shape priors for complex real-world objects which were shown to be beneficial for many tasks (e.g., in \cite{Nguyen-2016-CVPR}).
Firman \etal~\cite{Firman-2016-CVPR} use a structured-output regression forest to complete unobserved geometry of tabletop-sized objects.
A generative model proposed by Wu \etal~\cite{Wu-2015-CVPR}, termed Deep Belief Network, learns joint probability distribution $p(\m{x}, y)$ of complex 3D shapes $\m{x}$ across various object categories $y$.

End-to-end learning of stochastic motion control policies for active object and scene categorization is proposed by Jayaraman and Grauman~\cite{Jayaraman-2016-ECCV}.
Their CNN policy successively proposes views to capture with RGB camera to minimize categorization error.
The authors use a look-ahead error as an unsupervised regularizer on the classification objective.
%
Andreopoulos \etal~\cite{Andreopoulos-TRO-2011} solve the problem of an active search for an object in a 3D environment.
While they minimize the classification error of a single yet apriori unknown voxel containing the searched object, we minimize the expected reconstruction error of all voxels.
Also, their action space is significantly smaller than ours because they consider only local viewpoint changes at the next position while the SSL planning chooses from tens of thousands of rays over a longer horizon.

\newcommand{\ELL}{\mathcal{L}}



\section{Overview of the active 3D mapping}\label{sec:overview}

We assume that the vehicle follows a known path consisting of $L$ discrete positions and a depth measuring device (SSL) can capture at most $K$ rays at each position. The set of rays to be captured at position $l$ is denoted $J_l$.

We denote $\m{Y}$ the global ground-truth occupancy map, $\hat{\m{Y}}$ its estimate, and $\m{X}$ the map of the sparse measurements. All these map share common global reference frame corresponding to the first position in the path.
For each of these maps there are local counterparts $\m{y}_l, \hat{\m{y}}_l$, and $\m{x}_l$, respectively. 
Local maps corresponding to position $l$ all share a common reference frame which is aligned with the sensor and captures its local neighborhood.
The global ground-truth map $\m{Y}$ is used to synthesize sensor measurements $\m{x}_l$ and to generate local ground-truth maps $\m{y}_l$ for training. 

The active mapping pipeline,
consisting of a measure-reconstruct-plan loop,
is depicted in Fig.~\ref{fig:diagram} and detailed in Alg.~\ref{algo:pipeline}.
\begin{algorithm}
	\begin{algorithmic}[1]
		\item Initialize position $l \gets 0$ and select depth-measuring rays randomly.
		\item Measure depth in the directions selected for position $l$ and update global sparse measurements $\m{X}$ and dense reconstruction $\hat{\m{Y}}$ with these measurements.
		\item Obtain local measurements $\m{x}_l$ by interpolating $\m{X}$.
		\item Compute local occupancy confidence $\hat{\m{y}}_l = \m{h}_\theta(\m{x}_l)$ using the mapping network $\m{h}_\theta$.
		\item Update global occupancy confidence $\hat{\m{Y}} \gets \hat{\m{Y}} + \hat{\m{y}}_l$.
		\item Plan depth-measuring rays along the expected vehicle trajectory over horizon $L$ given reconstruction $\hat{\m{Y}}$.
		\item Repeat from line 2 for next position $l \gets l + 1$.
	\end{algorithmic}
	\caption{Active mapping}
	\label{algo:pipeline}
\end{algorithm}
Neglecting sensor noise, the set of depth-measuring rays obtained from the planning, the measurements $\m{x}_l$, and the resulting reconstruction $\hat{\m{Y}}$ can all be seen as a deterministic function of mapping parameters $\theta$ and $\m{Y}$.
If we assume that that ground-truth maps $\m{Y}$ come from a probability distribution,
both learning of $\theta$ and planning of the depth-measuring rays approximately minimize common objective
\begin{flalign}
\mathbb{E}_\m{Y}\cbrs{\ELL\rbrs{\m{Y}, \hat{\m{Y}}(\theta, \m{Y})}},\label{eq:common-obj}
\end{flalign}
where $\ELL(\m{Y}, \hat{\m{Y}}) = \sum_i w_i \log(1 + \exp(-Y_i \hat{Y}_i))$ is the weighted logistic loss, $Y_i \in \{-1, 1\}$ and $\hat{Y}_i \in \mathbb{R}$ denote the elements of $\m{Y}$ and $\hat{\m{Y}}$, respectively, corresponding to voxel $i$.
In learning, $w_i \ge 0$ are used to balance the two classes, \emph{empty} with $Y_i=-1$ and \emph{occupied} with $Y_i=1$, and to ignore the voxels with unknown occupancy.
We assume independence of measurements and use, for corresponding voxels $i$, additive updates of the occupancy confidence $\hat{Y}_i \gets \hat{Y}_i + h_i(\m{x}_l)$ with $h_i(\m{x}_l) \approx \log(\mathrm{Pr}(Y_i = 1 | \m{x}_l) / \mathrm{Pr}(Y_i = -1 | \m{x}_l))$.
$\mathrm{Pr}(Y_i = 1 | \m{x}_l)$ denotes the conditional probability of voxel $i$ being occupied given measurements $\m{x}_l$ and $\sigma(\hat{Y}_i) = 1 / (1 + e^{-\hat{Y}_i})$ is its current estimate.

\ifx\false
We assume that the voxel $i$ is visible in ray $j$ which intersects sequence of voxels $R$. If all $i$th preceding voxels $R^-(i)$ are not occupied and the voxel itself or at least one of the voxels which follow $R^+(i)$ are occupied.
Consequently, we estimate probability $p_{ij}$ of voxel $i$ \emph{not} being visible in $j$ as 
$p_{ij} = 1 - \prod_{u\in R^-(i)}(1-q_{u})\left(1 - \prod_{u\in R^+(i)}(1-q_{u})\right).$ 
If ray $j$ does not intersect the voxel $i$, then $p_{ij} = 1$.
\fi


\ifx\false
Dense map reconstruction is tackled by the mapping network $\m{h}_\theta$. Result of the reconstruction $\hat{\m{y}}_l = \m{h}_\theta(\m{x}_l)$ at position $l$ depends on the network parameters $\theta$ and sparse depth measurements $\m{x}_l$ determined by previously captured rays $J_1\dots J_l$. 
We define learning of $\theta$ as the minimization of the cross entropy loss $\sum_l\mathcal{H}\{\hat{\m{y}}_l, \m{y}_l\}$ between local reconstructions $\hat{\m{y}}_l$ from captured sparse measurement and the local ground truth maps $\m{y}_l$. 
The minimization is tackled by the Stochastic Gradient Descent detailed in Section~\ref{sec:mapping}. We denote one step of the SGD as follows:
$$
\theta^t = \texttt{SGD}(\theta^{t-1},J(\theta)) 
$$ 
where $J(\theta)$ denotes concatenation of all rays at all positions (in all training maps).

The planning at position $l$ of following rays $J_{l+1}\dots J_{l+L}$ for the horizon $L$ is defined as the minimization of the expected cross-entropy loss $E_{\m{Y}\sim \hat{\m{Y}}_l}\{\mathcal{H}(\hat{\m{Y}}_{l+L}, \m{Y})\}$ subject to the limited ray budget $K$. Planning is tackled by proposed Prioritized Greedy algorithm detailed in Section~\ref{sec:planning}. We denote Prioritized Greedy planning of all rays $J$ at all positions (and maps) as follows:
$$
J^t = \texttt{PG}(\theta, J^{t-1})
$$ 
\fi

\newenvironment{nospaceflalign}
{\setlength{\abovedisplayskip}{4pt}\setlength{\belowdisplayskip}{4pt}%
	\csname flalign\endcsname}
{\csname endflalign\endcsname\ignorespacesafterend}

\newenvironment{nospacetopflalign}
{\setlength{\abovedisplayskip}{2pt}\setlength{\belowdisplayskip}{4pt}%
	\csname flalign\endcsname}
{\csname endflalign\endcsname\ignorespacesafterend}

\newenvironment{nospacebelowflalign}
{\setlength{\abovedisplayskip}{4pt}\setlength{\belowdisplayskip}{2pt}%
	\csname flalign\endcsname}
{\csname endflalign\endcsname\ignorespacesafterend}

\section{Learning of 3D mapping network}\label{sec:mapping}


The learning is defined as approximate minimization of Equation~\ref{eq:common-obj}.
Since (i) the result of planning $\m{x}_l\rbr{\theta,\m{Y}}$ is not differentiable with respect to $\theta$ and
(ii) we want to reduce variability of training data\footnote{We introduce a canonical frame by using the local maps instead of the global ones, which helps the mapping network to capture local regularities.},
we locally approximate the criterion around a point $\theta^0$ as
\begin{nospacebelowflalign}
&\mathbb{E}_\m{Y}\{\sum_{l}\ELL(\m{y}_l,\m{h}_\theta(\m{x}_l(\theta^0,\m{Y})))\}
\end{nospacebelowflalign}
by fixing the result of planning in $\m{x}_l(\theta^0,\m{Y})$.
We also introduce a canonical frame by using the local maps instead of the global ones, which helps the mapping network to capture local regularities.
The learning then becomes the following iterative optimization 
\begin{nospacebelowflalign}
\theta^t=\arg\min_\theta \mathbb{E}_\m{Y}\cbrs{\sum_l\ELL\rbrs{
    \m{y}_l,\m{h}_\theta(\m{x}_l(\theta^{t-1},\m{Y}))
}},
\label{eq:opt-approx-common-obj}
\end{nospacebelowflalign}
\ifx\false
Note, that in order to achieve (i) local optimality of the criterion and (ii) statistical consistency of the learning process (i.e. that the training distribution of sparse measurements $\m{x}_l$ corresponds to the one obtained by planning), one would have to find a fixed point of Equation~\ref{eq:opt-approx-common-obj}. Since there are no guarantees that any fixed point exists, we instead iterate the minimization until validation error is decreasing.
\fi
where minimization in each iteration is tackled by Stochastic Gradient Descent. Learning is summarized in Alg.~\ref{algo:learning}.


%
%

\begin{algorithm}
	\begin{algorithmic}[1]
        \item Initialize $t\gets 0$ and obtain dataset $\s{D}_0 = \{(\m{x}_l, \m{y}_l)\}_l$ by running the pipeline with the rays being selected randomly, instead of using the planner. 
        \item Train the mapping network on $\s{D}_t$ to obtain $\m{h}_{\theta^t}$ with parameters $\theta^t$.
        \item Obtain $\s{D}_{t+1} = \{(\m{x}_l(\theta^t,\m{Y}), \m{y}_l)\}_l$ by running Alg.~\ref{algo:pipeline} and using $\m{h}_{\theta^t}$ for mapping.
        \item Set $t \gets t + 1$ and repeat from line 2 until validation error stops decreasing.
	\end{algorithmic}
	\caption{Learning of active mapping}
	\label{algo:learning}
\end{algorithm}

Note, that in order to achieve (i) local optimality of the criterion and (ii) statistical consistency of the learning process (i.e., that the training distribution of sparse measurements $\m{x}_l$ corresponds to the one obtained by planning), one would have to find a fixed point of Equation~\ref{eq:opt-approx-common-obj}. Since there are no guarantees that any fixed point exists, we instead iterate the minimization until validation error is decreasing.

\ifx\false
Initial learning parameters $\theta^0$ are obtained by training the mapping network on dataset $\s{D}_0 = \{(\m{x}_l, \m{y}_l)\}_l$, where sparse measurements $\m{x}_l$ are randomly generated from available depth-measuring rays. 
This network provides reconstructions of the global map for all positions $l$. Planning on these maps determines sparse depth measurements $\m{x}_l(\theta^0,\m{Y})$.  
In the following iterations $t\geq1$, the mapping network  $\m{h}_{\theta^t}$ is always trained on dataset $\s{D}_t = \{(\m{x}_l(\theta^0,\m{Y}), \m{y}_l)\}_l$. 




\fi

\ifx\false
We seek a CNN which reconstructs local 3D occupancy map $\m{y}_l$ from sparse measurements $\m{x}_l$ selected by a planner minimizing the expected reconstruction error.
Since sparse depth measurements are planned in the global map estimated by CNN
and, on the other side, the CNN estimates the global map from the sparse depth measurements estimated by the planner, the planning and learning are mutually interconnected.
\fi


\begin{figure}[t]
	\begin{center}
		\includegraphics[width=0.92\linewidth]{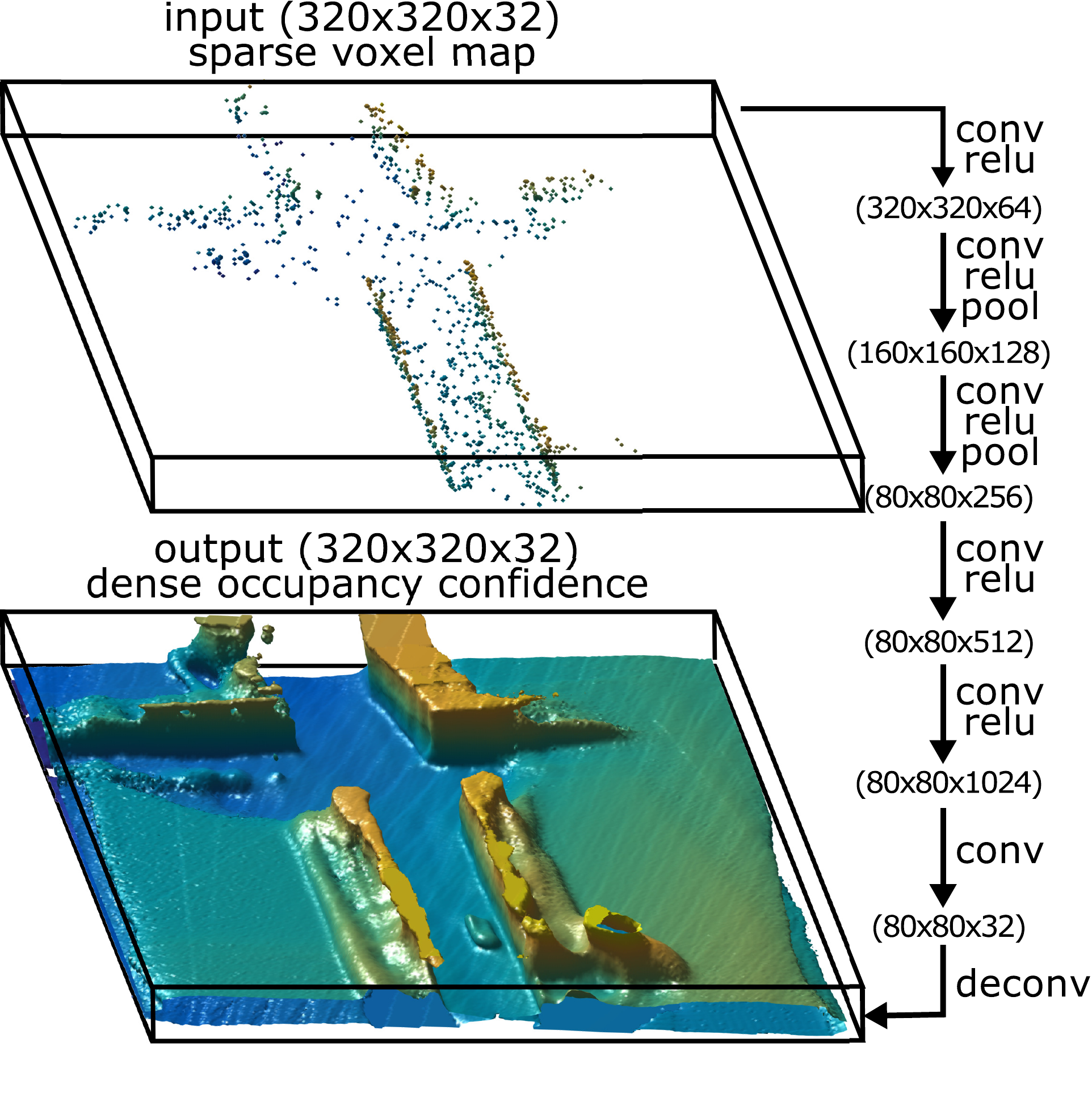}
	\end{center}
	\caption{Architecture of the mapping network.
		\textbf{Top:} An example input with sparse measurements, showing only the occupied voxels.
		\textbf{Bottom:} The corresponding reconstructed dense occupancy confidence after thresholding.
		\textbf{Right:} Schema of the network architecture, composed from the convolutional layers (denoted \emph{conv}), linear rectifier units (\emph{relu}), pooling layers (\emph{pool}), and upsampling layers (\emph{deconv}).}
	\label{fig:net}
\end{figure}

The mapping network consists of 6 convolutional layers with $5 \times 5$ kernels followed by linear rectifier units (element-wise $\max\{x,0\}$) and, in 2 cases, by max pooling layers with $2 \times 2$ kernels and stride $2$, see Fig.~\ref{fig:net}.
In the end, there is an fourfold upsampling layer so that the output has same size as input.
The network was implemented in {\emph{MatConvNet}}~\cite{Vedaldi-2015-ICMM}.


\section{Planning of depth measuring rays}\label{sec:planning}

\ifx\false
We assume that the vehicle follows a known path consisting from $L$ discrete positions, the SSL can capture at most $K$ rays at each position. We search for the subset of rays along this path, which decreases the logistic loss the most. Since it is not clear how to quantify the impact of measuring a subset of voxels on the CNN logistic loss, we simplify the problem and assume that measuring a voxel $i$ decreases the overall logistic loss only by its current expected loss $\epsilon_i\ELL(Y_i, Y^*_i)$. Measuring the depth in the set of rays $J$ yields for each voxel $i$, the probability $\m{p}_i(J)$ to be measured. 

 
Each voxel $i$ has probability $p_{ij}$ \textbf{not} to be covered by ray $j$. 
Given global confidence map $Y$ we estimate the probability $q_i = 1 / \left(1 + e^{-Y_i}\right)$ of voxel $i$ being occupied.
We assume that the voxel $i$ is visible in ray $j$ which intersects sequence of voxels $R$. If all $i$th preceding voxels $R^-(i)$ are not occupied and the voxel itself or at least one of the voxels which follow $R^+(i)$ are occupied.
Consequently, we estimate probability $p_{ij}$ of voxel $i$ \emph{not} being visible in $j$ as 
$p_{ij} = 1 - \prod_{u\in R^-(i)}(1-q_{u})\left(1 - \prod_{u\in R^+(i)}(1-q_{u})\right).$ 
If ray $j$ does not intersect the voxel $i$, then $p_{ij} = 1$.

We assume that when a voxel is covered by more than one ray, their contributions are independent, therefore probability of voxel $i$ being not covered by any ray from the set of rays $J$ is equal to $p_i(J) = \prod_{j\in J} p_{ij}$. 
We define the planning as the search for the sequence $J=\{J_1, \dots J_L\}$ of subsets of depth-measuring rays $J_1, \dots J_L$, which minimize the expected sum of reconstruction errors $E_{I\sim p(J)}\{\sum_{i\in I} \epsilon_i\}\approx\m{\epsilon}^\mt\m{p}(J) $, such that $|J_1|\leq K, \dots |J_L|\leq K$. 
\fi

Planning at position $l$ searches for a set of rays $J$, which approximately minimizes the expected logistic loss $\ELL(\m{Y},\m{h}_{\theta^t}(\m{x}_{l+L}))$ between ground truth map $\m{Y}$ and reconstruction obtained from sparse measurements $\m{x}_{l+L}$ at the horizon $L$.
The result of planning is the set of rays $J$, 
which will provide measurements for a sparse set of voxels. This set of voxels is referred to as \emph{covered} by $J$ and denoted as $C(J)$.
While the mapping network is trained \emph{offline} on the ground-truth maps, the planning have to search the subset of rays \emph{online} without any explicit knowledge of the ground-truth occupancy $\m{Y}$. 
Since it is not clear how to directly quantify the impact of measuring a subset of voxels on the reconstruction $\m{h}_{\theta^t}(\m{x}_{l+L})$, we introduce simplified reconstruction model $\hat{\m{h}}(J,\hat{\m{Y}})$, which predicts the loss based on currently available map $\hat{\m{Y}}$. This model conservatively assumes that the reconstruction in covered voxels $i\in C(J)$ is correct (i.e. $\ELL\big(Y_{i},\hat{h}_i(J,\hat{\m{Y}})\big) = 0$) and reconstruction of not covered voxels $i\notin C(J)$ does not change (i.e. $\ELL\big(Y_{i},\hat{h}_i(J,\hat{\m{Y}})\big) = \ELL(Y_{i},\hat{Y}_{i})$). Given this reconstruction model, the expected loss simplifies to:
\begin{nospaceflalign}
&\mkern-5mu\sum_i\ELL\big(Y_{i},\hat{h}_i(J,\hat{\m{Y}})\big)= 
\sum_{i\notin C(J)}\ELL(Y_i,\hat{Y}_i)
\end{nospaceflalign}
Since the ground-truth occupancy of voxels is apriori unknown, neither the voxel-wise loss nor the coverage are known. We model the expected loss  
in voxel $i$ as 
\begin{flalign}
\ELL(Y_i,\hat{Y}_i) \approx
\mathbb{E}_{Y_{i}\sim \textrm{B}(\sigma(\hat{Y}_{i}))}\ELL(Y_{i},\hat{Y}_{i}) = \mathcal{H}(\textrm{B}(\sigma(\hat{Y}_i))) = \epsilon_i,
\end{flalign}
where $\mathcal{H}(\textrm{B}(p))$ is the entropy of the Bernoulli distribution with parameter $p$, denoting the probability of outcome $1$ from the possible outcomes $\{-1, 1\}$.
The vector of concatenated losses $\epsilon_i$ is denoted $\m{\boldsymbol\epsilon}$.

\ifx\false
Planning at position $l$ approximately minimizes the expected logistic loss $\ELL\big(\m{Y},\m{h}_\theta(\m{x}_{l+L}(\theta^0,\m{Y}))\big)$ at the horizon $L$.
The result of planning is the set of rays $J$, 
which will provide measurements for a sparse set of voxels. This set of voxels is referred to as \emph{covered} by $J$ and denoted as $C(J)$.
While the mapping network is trained \emph{offline} on the ground-truth maps, the planning have to search the subset of rays \emph{online} without any explicit knowledge of the ground-truth occupancy $\m{Y}$. 
Since it is not clear how to directly quantify the impact of measuring a subset of voxels on the reconstruction $\m{h}_\theta(\m{x}_{l+L}(\theta^0,\m{Y}))$, we introduce simplified reconstruction model $\hat{\m{h}}(J,\hat{\m{Y}})$. This model conservatively assumes that the reconstruction in covered voxels $i\in C(J)$ is correct (i.e. $\ELL\big(\m{y}_{l+L,i},\hat{\m{h}}_i(J,\hat{\m{Y}})\big) = 0$) and reconstruction of not covered voxels $i\notin C(J)$ does not change. Given this reconstruction model, the expected loss simplifies to:

\begin{flalign}
&\mkern-5mu\sum_i\ELL\big(\m{y}_{l+L,i},\hat{\m{h}}_i(J,\hat{\m{Y}})\big)= 
\sum_{i\notin C(J)}\ELL(\m{y}_{l+L,i},\hat{\m{y}}_{l+L,i})
\end{flalign}
Since the ground-truth occupancy of voxels is apriori unknown, neither the voxel-wise loss nor the coverage are known. We model the expected loss  
in voxel $i$ as 
\begin{flalign}
\mkern-15mu\epsilon_i = \mathbb{E}_{\m{y}_{l+L,i}\sim \textrm{Bern}(\sigma(\hat{\m{y}}_{l,i})}\ELL(\m{y}_{l+L,i},\hat{\m{y}}_{l+L,i})=\nonumber \\ =\mathcal{H}(\textrm{Bern}(\sigma(\hat{\m{y}}_{l,i})))
\end{flalign}
Vector of concatenated voxel-wise losses $\epsilon_i$ is denoted as $\boldsymbol\epsilon$.
\fi

The length of particular rays is also unknown, therefore coverage $C(J)$ of voxels by particular rays cannot be determined uniquely.
Consequently, we introduce probability $p_{ij}$ that voxel $i$ will not be covered by ray $j\in J$.
This probability is estimated from currently available map $\hat{\m{Y}}$ as the product of
(i) the probability that the voxels on ray $j$ which lie between voxel $i$ and the sensor are unoccupied and
(ii) the probability that at least one of the following voxels or the voxel $i$ itself are occupied.
If ray $j$ does not intersect voxel $i$, then $p_{ij} = 1$.
The vector of probabilities $p_{ij}$ for ray $j$ is denoted $\m{p}_j$.
Assuming that rays $J$ are independent measurements, the expected loss is modeled as $\m{\boldsymbol\epsilon}^\mt\prod_{j\in J} \m{p}_j$.
\ifx\false
Each ray $j$ intersects a sequence of voxels $R_j$ ordered by the distance from the sensor.
We assume that voxel $i$ is visible in ray $j$ if all voxels $R_j^-(i)$ which precede voxel $i$ are empty and the voxel itself or at least one of the voxels which follow $R_j^+(i)$ are occupied.
Consequently, we estimate probability $p_{ij}$ of voxel $i$ \emph{not} being covered by ray $j$ as $p_{ij} = 1 - \prod_{u\in R^-(i)}(1-q_{u}(\hat{\m{Y}}))\left(1 - \prod_{u\in R^+(i)}(1-q_{u}(\hat{\m{Y}}))\right).$

\fi

\ifx\false
\begin{flalign}
&\mkern-16mu\mathbb{E}_{\m{Y}\sim \hat{\m{Y}}_l}\{\mathcal{H}(\m{Y},\hat{\m{Y}}_{l+L})\} = \mathbb{E}_{\m{Y}\sim \hat{\m{Y}}_l}\{\mathcal{H}(\m{Y}, \hat{\m{Y}}_{l})\}\m{p}(J,\hat{\m{Y}}_l)
\end{flalign}
where the expected voxel-wise loss at the current position $l$ reduces to the entropy of the current map $\hat{\m{Y}}_{l}$:
\begin{flalign}
& \mathbb{E}_{\m{Y}\sim \hat{\m{Y}}_l}\{\mathcal{H}(\hat{\m{Y}}_{l}, \m{Y})\} = \mathcal{H}(1,\hat{\m{Y}}_{l})\hat{\m{Y}}_{l} + \mathcal{H}(0,\hat{\m{Y}}_{l})(1-\hat{\m{Y}}_{l})= \nonumber\\
&\mkern150mu=\mathcal{H}(\hat{\m{Y}}_{l}),
\end{flalign}
and $\m{p}(J,\hat{\m{Y}}_l)$ is the voxel-wise probability that particular voxels will \emph{not} be covered by the set of rays $J$. Note, that since the ground-truth occupancy of voxels is apriori unknown, the length of particular rays is also unknown and the coverage of voxels by particular rays cannot be determined uniquely, therefore probability $\m{p}(J,\hat{\m{Y}}_l)$ is estimated instead. Assuming that rays are independent measurements, the probability is 
$
\m{p}(J,\hat{\m{Y}}_l) = \prod_{j\in J} \m{p}_j,
$
where $\m{p}_j$ is the probability that voxels are not covered by ray $j\in J$ and $\prod_{j\in J} \m{p}_j$ is element-wise multiplication of vectors $\m{p}_j$.

Without loss of generality, we assume that planning is performed for a fixed position $l=0$, therefore index $l$ is dropped in the following text for simplicity. We assume that the current map $\hat{\m{Y}}$ determines vectorized voxel-wise probabilities $\m{p}_j$ and vectorized voxel-wise losses $\boldsymbol\epsilon = \mathcal{H}(\hat{\m{Y}})$, detailed description is available in section~\ref{sec:appendix-coverage_probability}. 
\fi

The planning searches for the set $J= J_1\cup\dots\cup J_L$ of subsets $J_1\dots J_L$ of depth-measuring rays for the following $L$ positions, which minimize the expected loss, subject to budget constraints $|J_1|\leq K, \dots |J_L|\leq K$
\begin{nospacebelowflalign}
\mkern-7mu J^* = \arg\min_J\;\m{\boldsymbol\epsilon}^\mt\prod_{j\in J} \m{p}_j,\, \subjectto |J_1|\leq K, \dots |J_L|\leq K, \label{eq:planning-problem}
\end{nospacebelowflalign}
%
where $|J_l|$ denotes cardinality of the set $J_l$.
 
 
This is a non-convex combinatorial problem\footnote{In our experiments, the number of possible combinations is greater then $10^{2000}$.} which needs to be solved online repeatedly for millions of potential rays. We tried several convex approximations, however the high-dimensional optimization has been extremely time consuming and the improvement with respect to the significantly faster greedy algorithm was negligible. As a consequence of that, we have decided to use the greedy algorithm. We first introduce its simplified version (Alg.~\ref{algo:greedy0}) and derive its properties, the significantly faster prioritized greedy algorithm (Alg.~\ref{algo:greedy}) is explained later. 

We denote the list of available rays at position $l$ as $V_l$. At the beginning, the list of all available rays is initialized as follows $V = V_1 \cup \dots \cup V_L$. Alg.~\ref{algo:greedy0} successively builds the set of selected rays $J$. In each iteration the best ray $j^*$ is selected, added into $J$ and removed from $V$. The position from which the ray $j^*$ is chosen is denoted $l^*$. If the budget $K$ of $l^*$ is reached, all rays from $V_{l^*}$ are removed from $V$. 

In order to avoid multiplication of all selected  rays at each iteration, we introduce the vector $\m{b}$, which keeps voxel loss. Vector $\m{b}$ is initialized  as $\m{b}=\m{\boldsymbol\epsilon}$ and whenever ray $j$ is selected, voxel losses are updated as follows $\m{b} = \m{b}\odot\m{p}_j$, where $\odot$ denotes element-wise multiplication. 


%
%
%
%
%
%

\begin{algorithm}[h]
    \small{
\begin{algorithmic}[1]
\Require Set of available rays $V$ and budget $K$
\State $J \gets \emptyset$ \TabComment{4.5cm}{Initialization}
\State $\m{b} \gets \m{\boldsymbol\epsilon}$
\While{$\neg(V=\emptyset)$}
    \State $j^* \gets \arg\min_{j\in V} \m{b}^\mt\m{p}_j$ \TabComment{4.5cm}{Add the best ray}
    \State $J \gets J\cup j^*$
    \State $\m{b} \gets \m{b}\odot\m{p}_j$ \TabComment{4.5cm}{Update voxel costs}
    \State $V \gets V\setminus j^*$ \TabComment{4.5cm}{Remove $j^*$ from $V$}
    \If{$|J_{l^*}|=K$} 
        \State $V \gets V\setminus V_{l^*}$ \TabComment{4.5cm}{Close position}
    \EndIf
\EndWhile
\State \Return Set of selected rays $J$
\end{algorithmic}
\caption{Greedy planning}
\label{algo:greedy0}
}
\end{algorithm}

The rest of this section is organized as follows: Section~\ref{sec:approximation-ratio} shows the upper bound for the approximation ratio of the greedy algorithm. Section~\ref{sec:faster-greedy} introduces the prioritized greedy algorithm, which in each iteration needs to re-evaluate the cost function $\m{b}^\mt\m{p}_j$ only for a small fraction of rays. 

\subsection{Approximation ratio of the greedy algorithm}\label{sec:approximation-ratio}

\ifx\false
We start by couple of lemmas, which are necessary for proving the main theorem. 

\begin{lemma}\label{lemma1}
	The following upper bound holds:
	\begin{eqnarray}
	 & \max_{\m{x},\m{y}} \sum_i x_iy_i \;\;\;\;\;\;\; & \leq \min\{A,B\} \\
	\subjectto & \sum_i x_i \leq A,\; 0\leq x_i \leq 1, & \nonumber \\
	           & \sum_i y_i \leq B,\; 0\leq y_i \leq 1. & \nonumber
	\end{eqnarray}
\end{lemma}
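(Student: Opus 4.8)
The plan is to bound the bilinear objective $\sum_i x_i y_i$ separately by $A$ and by $B$, using the box constraints $0 \le x_i \le 1$ and $0 \le y_i \le 1$ to turn the product into a linear quantity that the sum constraints control directly.

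First I would fix an arbitrary feasible pair $(\m{x},\m{y})$ and argue termwise. Since $0 \le y_i \le 1$ and $x_i \ge 0$, we have $x_i y_i \le x_i$ for every $i$; summing over $i$ and using $\sum_i x_i \le A$ gives $\sum_i x_i y_i \le A$. By the symmetric argument, using $0 \le x_i \le 1$ and $y_i \ge 0$, we get $x_i y_i \le y_i$ and hence $\sum_i x_i y_i \le \sum_i y_i \le B$. Combining the two estimates yields $\sum_i x_i y_i \le \min\{A,B\}$ for every feasible $(\m{x},\m{y})$, and since this holds pointwise on the feasible set, it holds for the maximum as well, which is the claimed inequality.

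There is essentially no obstacle: the only point worth stating explicitly is that both upper box constraints and both nonnegativity constraints are genuinely used — dropping any of them breaks the termwise estimate $x_i y_i \le \min\{x_i, y_i\}$ on which everything rests — and that no stronger constant-free bound is available, since for integer $A = B$ the bound is attained by setting $A$ coordinates of both vectors to $1$. Tightness is not needed for the statement, so I would mention it only in passing, if at all.
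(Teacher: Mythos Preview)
Your proof is correct and is essentially the same as the paper's: the paper assumes without loss of generality that $B\le A$ and then uses exactly your termwise estimate $x_i y_i \le y_i$ (from $0\le x_i\le 1$) together with $\sum_i y_i \le B$. You simply carry out both symmetric bounds explicitly instead of invoking WLOG, which is a cosmetic difference only.
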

\begin{proof}
	Without loss of generality we assume that $B\leq A$. Then $\min\{A,B\}=B\geq \sum_i y_i \geq \sum_i x_i y_i$, because multiplying positive numbers $y_i$ by positive and smaller than one numbers $x_i$ can only decrease their values.
\end{proof}

\begin{lemma}\label{lemma2}
    For any $0\leq p \leq 1$:
	\begin{eqnarray}
    \label{lemma2-max}
	\max_{\m{x}} &\sum_{i=1}^k x_i& = k-1+p. \\
	\subjectto &\prod_{i=1}^k x_i = p& \label{lemma2-eq} \\
	&0\leq x_i \leq 1&\nonumber
	\end{eqnarray}
\end{lemma}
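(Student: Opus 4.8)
The plan is to establish the claimed optimal value in two steps: exhibit a feasible point attaining $k-1+p$, and then show no feasible point can exceed it. For the first step I would take $x_1=\dots=x_{k-1}=1$ and $x_k=p$; since $0\le p\le 1$ this lies in $[0,1]^k$, its product is $p$, and its coordinate sum is exactly $k-1+p$. This already shows the feasible set is nonempty, and since it is also closed and bounded in $[0,1]^k$ (being the preimage of $\{p\}$ under the continuous product map), it is compact, so the objective $\sum_i x_i$ genuinely attains a maximum on it.

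The substance is the upper bound, which I would prove by induction on $k$, driven by one elementary fact: for all $a,b\in[0,1]$ we have $a+b\le 1+ab$, because $1+ab-a-b=(1-a)(1-b)\ge 0$. The base case $k=1$ is trivial, as the only feasible point is $x_1=p$ with sum $p=k-1+p$. For the inductive step, take any $x_1,\dots,x_k\in[0,1]$ with $\prod_{i=1}^k x_i=p$ and set $q=x_{k-1}x_k$, which again lies in $[0,1]$. The merging inequality gives $x_{k-1}+x_k\le 1+q$, hence
\[
\sum_{i=1}^k x_i \;\le\; \Big(\sum_{i=1}^{k-2} x_i + q\Big) + 1 .
\]
The $k-1$ numbers $x_1,\dots,x_{k-2},q$ all lie in $[0,1]$ and have product $p$, so the induction hypothesis bounds the parenthesised sum by $(k-1)-1+p=k-2+p$, giving $\sum_{i=1}^k x_i\le k-1+p$. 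Combined with the feasible point above, this yields the asserted equality.

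The one place requiring care — and the reason a direct Lagrange-multiplier argument is misleading here — is the box constraint $x_i\le 1$: under the product constraint alone, the only interior stationary point $x_i=p^{1/k}$ is where the sum is \emph{minimised} (by AM--GM), so the maximiser must sit on the boundary of the cube. The merging inequality is exactly the tool that pushes mass to the corners while preserving both membership in $[0,1]$ and the product, and I do not anticipate any real obstacle beyond checking, as above, that the merged value $q=x_{k-1}x_k$ stays in $[0,1]$, which it does automatically.
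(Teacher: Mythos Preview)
Your proof is correct. Both your argument and the paper's hinge on the same elementary inequality $a+b\le 1+ab$ for $a,b\in[0,1]$ (equivalently, $(1-a)(1-b)\ge 0$), so the key idea is identical. The packaging differs: the paper argues by contradiction on the structure of the maximiser --- if two coordinates were strictly below $1$, merging them would strictly increase the sum, so at most one coordinate can be below $1$ --- and handles $p=0$ separately. Your induction on $k$ applies the merging inequality once to collapse $x_{k-1},x_k$ into $q=x_{k-1}x_k$ and then invokes the hypothesis; this treats $p=0$ and $p>0$ uniformly and does not require reasoning about which point is the maximiser. The paper's version, on the other hand, yields the explicit form of every maximiser as a by-product. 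Either way the substance is the same one-line estimate.
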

\begin{proof}
    Let us consider two cases separately.
    First, for $p = 0$, a single element $x_i = 0$ is sufficient to satisfy Eq.~(\ref{lemma2-eq}).
    The criterion can then be maximized element-wise by setting $(k-1)$ elements $x_{j\neq i} = 1$.
    Second, for $p > 0$, $x_i > 0$ must hold for every $i$ to satisfy Eq.~(\ref{lemma2-eq}).
    Now suppose that the maximizer contains at least two elements $x_i, x_j < 1$.
    It follows that
    \begin{eqnarray}
    (1 - x_i) x_j &<& 1 - x_i, \nonumber \\
    x_i + x_j &<& x_i x_j + 1.\label{lemma2-ineq}
    \end{eqnarray}
    Using $x_i' = x_i x_j$ and $x_j' = 1$ instead of $x_i$ and $x_j$, respectively, gives us another feasible point, since $x_i' x_j' = (x_i x_j) 1 = x_i x_j = p / \prod_{n \neq i,j} x_n$.
    By adding $\sum_{n \neq i,j} x_n$ to both sides of Eq.~(\ref{lemma2-ineq}) it can be seen, however, that a higher value is attained at this point.
    This contradicts the proposition that the maximizer contains at least two elements $x_i, x_j < 1$; the maximizer must contain at most one $x_i < 1$ with the rest of elements $x_{j\neq i} = 1$.
    From Eq.~(\ref{lemma2-eq}) it follows that $x_i = p$ and Eq.~(\ref{lemma2-max}) holds.
\end{proof}
\fi 

We define the approximation ratio of a minimization algorithm to be $\rho=\frac{f}{\OPT}$, where $f$ is the cost function achieved by the algorithm and $\OPT$ is the optimal value of the cost function. Given $\rho$, we know that the algorithm provides solution whose value is at most $\rho\ \OPT$. In this section we derive the upper bound of the approximation ratio $\UB(\rho)$ of Algorithm~\ref{algo:greedy0}. Figure~\ref{fig:approximation_ratio} shows values of $\UB(\rho)$ for different number of positions $L$.

The greedy algorithm successively selects rays that reduce the cost function the most. To show how cost function differs from $\OPT$, an upper bound on the cost function need to be derived. Let us suppose that in the beginning of an arbitrary iteration we have voxel losses given by vector $\m{b}$, the following lemma states that for arbitrary voxel $i$, there always exists a ray $j$, that reduces the cost function to $\sum_i b_i(1-\frac{1}{K}) + \frac{\OPT}{K}$, where $\OPT = \m{1}^\mt\prod_{j=1}^K \m{p}_{j} =\m{1}^\mt \m{p}^{\OPT}$ 
%
%
is the unknown optimum value of the cost function which is achievable by $K$ rays $\m{p}_1\dots \m{p}_K$.

\begin{lemma}\label{lemma3}
	If for some rays $\prod_{j=1}^K p_{ij}=p_i^{\opt}$ then 
	\begin{nospaceflalign}
	\forall_{\m{0}\leq \m{b} \leq \m{1}} \exists_j \sum_{i=1}^V p_{ij}b_i \leq \sum_{i=1}^V b_i(1-\frac{1}{K}) + \frac{\OPT}{K}
	\end{nospaceflalign}
\end{lemma}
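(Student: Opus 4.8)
The plan is to exploit the $K$ rays $\m{p}_1,\dots,\m{p}_K$ realizing $p_i^{\opt}=\prod_{j=1}^K p_{ij}$, and to argue by averaging: among any fixed collection of rays, the cheapest one under the linear cost $\m{b}^\mt\m{p}_j=\sum_i b_i p_{ij}$ is no more expensive than their mean. Thus I would first write
\[
\min_{j\le K}\sum_i b_i p_{ij}\;\le\;\frac1K\sum_{j=1}^{K}\sum_i b_i p_{ij}\;=\;\frac1K\sum_i b_i\sum_{j=1}^{K} p_{ij},
\]
where the last equality only swaps the order of summation. It then suffices to bound the right-hand side by $\bigl(1-\tfrac1K\bigr)\sum_i b_i+\tfrac{\OPT}{K}$; the ray attaining the minimum on the left is then the witness required by the lemma.

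The one nontrivial ingredient is the elementary inequality: for $x_1,\dots,x_K\in[0,1]$ with $\prod_j x_j=p$ one has $\sum_j x_j\le K-1+p$, i.e.\ the sum of numbers in $[0,1]$ with a prescribed product is largest when all but one of them equal $1$. A short justification: the feasible set is compact, so a maximizer exists; if it had two factors $x_a,x_b<1$, replacing them by $x_ax_b$ and $1$ keeps the product fixed while strictly increasing the sum (since $(1-x_a)(1-x_b)>0$ gives $x_a+x_b<x_ax_b+1$), a contradiction, so at most one factor is $<1$, and it is forced to equal $p$ (for $p=0$, a single zero factor with the rest equal to $1$ works). Applying this with $x_j=p_{ij}$ and $p=p_i^{\opt}$ yields $\sum_{j=1}^{K}p_{ij}\le K-1+p_i^{\opt}$ for every voxel $i$.

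Substituting this bound and using $0\le b_i\le 1$ and $p_i^{\opt}\ge 0$:
\begin{align*}
\frac1K\sum_i b_i\sum_{j=1}^{K} p_{ij}
&\;\le\;\frac1K\sum_i b_i\,(K-1+p_i^{\opt})
\;=\;\Bigl(1-\tfrac1K\Bigr)\sum_i b_i+\frac1K\sum_i b_i\,p_i^{\opt}\\
&\;\le\;\Bigl(1-\tfrac1K\Bigr)\sum_i b_i+\frac{\OPT}{K},
\end{align*}
because $\sum_i b_i\,p_i^{\opt}\le\sum_i p_i^{\opt}=\m{1}^\mt\m{p}^{\OPT}=\OPT$. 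Chaining this with the averaging step above gives the claim. I do not expect any genuine obstacle here: the only slightly subtle point is the product-to-sum inequality, and once it is established the rest is pure linearity and monotonicity; note also that this ray can only be outperformed by the one the greedy algorithm actually selects, which is what makes the lemma useful for bounding the approximation ratio.
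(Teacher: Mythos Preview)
Your proof is correct and follows essentially the same route as the paper: both arguments apply the per-voxel inequality $\sum_{j=1}^{K}p_{ij}\le K-1+p_i^{\opt}$ (the paper states it without proof here, you supply the standard exchange argument), multiply by $b_i$, sum over voxels, use averaging/pigeonhole over the $K$ rays, and finally bound $\sum_i b_i\,p_i^{\opt}\le\sum_i p_i^{\opt}=\OPT$ using $b_i\le 1$. The only cosmetic difference is that you phrase the key step as ``the minimum is at most the mean'' while the paper writes ``at least one of $K$ terms is at most $1/K$ of the total''.
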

\noindent\emph{Proof:}
	We know that there is optimal solution consisting from $K$ rays.
	Without loss of generality we assume that $\prod_{j=1}^K p_{ij}=p_i^{\opt}$ holds for first $K$ rays, then 
	\begin{nospaceflalign}
	\forall_i \sum_{j=1}^{K} p_{ij} \leq K-1+p^{\opt}_i.
	\end{nospaceflalign}
	This holds for an arbitrary positive scaling factor $b_i$, therefore
	\begin{nospacetopflalign}
	\forall_i \sum_{j=1}^{K} p_{ij}b_i \leq (K-1+p^{\opt}_i)b_i. 
	\end{nospacetopflalign}
	We sum up inequalities over all voxels $i$ 
	\begin{equation}
	\sum_{i=1}^V\sum_{j=1}^{K} p_{ij}b_i \leq \sum_{i=1}^V(K-1+p^{\opt}_i)b_i.
	\end{equation}
	We switch sums in the left hand side of the inequality to obtain addition of $K$ terms as follows  
	\begin{equation}
	\sum_{i=1}^V p_{i1}b_i + \dots + \sum_{i=1}^V p_{iK}b_i \leq \sum_{i=1}^V(K-1+p^{\opt}_i)b_i
	\end{equation}
	Hence, we know that at least one of these $K$ terms has to be smaller than or equal to $\frac{1}{K}$ of the right hand side
	\begin{eqnarray}
	\hspace{-0.4cm}\exists_j \sum_{i=1}^V p_{ij}b_i &\leq& \frac{1}{K}\sum_{i=1}^V(K-1+p^{\opt}_i)b_i= \nonumber \\
									&=&    \sum_{i=1}^Vb_i (1-\frac{1}{K})+\frac{1}{K}\sum_{i=1}^Vp^{\opt}_ib_i\leq \nonumber \\
									&\leq & \sum_{i=1}^Vb_i (1-\frac{1}{K})+\sum_{i=1}^V\frac{p^{\opt}_i}{K}= \\
									&=& \sum_{i=1}^Vb_i (1-\frac{1}{K})+\frac{\OPT}{K}  \nonumber\hspace{1.3cm}\square
	\end{eqnarray}
Especially, if there is only one position, all optimal $K$ rays $\m{p}_1\dots \m{p}_K$ are either already selected or still available. This assumption allows to derive the following upper bound on the cost function of the greedy algorithm $f^K$ after $K$ iterations for $L=1$.
\begin{thm}\label{theorem1}
	Upper bound $\mathit{\UB}(f^K)\geq f^K$ of the greedy algorithm after $K$ iterations is 
	\begin{nospaceflalign}
	\mathit{\UB}(f^K) = E \frac{1}{e} + \OPT\left(1-\frac{1}{e}\right), 
	\end{nospaceflalign} 
	 where $E = \sum_{i=1}^V \epsilon_i$ and $e$ is Euler number. 
\end{thm}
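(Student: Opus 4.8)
\emph{Proof proposal.} The plan is to promote the per-step guarantee of Lemma~\ref{lemma3} to a linear recurrence for the value of the cost function over the $K$ greedy steps, solve that recurrence in closed form, and estimate the resulting geometric factor by $1/e$. Write $\m{b}^{(t)}$ for the vector $\m{b}$ after $t$ iterations of Alg.~\ref{algo:greedy0} (so $\m{b}^{(0)}=\m{\boldsymbol\epsilon}$) and put $f^t=\m{1}^\mt\m{b}^{(t)}$, the cost function once $t$ rays have been selected; in particular $f^0=E$, and since for $L=1$ the algorithm performs exactly $K$ iterations, $f^K$ is its output. I would first record that $\m{0}\le\m{b}^{(t)}\le\m{1}$ holds for every $t$: it holds at $t=0$ because each $\epsilon_i=\mathcal{H}(\textrm{B}(\sigma(\hat{Y}_i)))\le 1$, and it is preserved by the update $\m{b}\gets\m{b}\odot\m{p}_j$ since $\m{0}\le\m{p}_j\le\m{1}$. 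Hence Lemma~\ref{lemma3} may be invoked at the start of every iteration.

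Next, fix the optimal $K$ rays with $\prod_{j=1}^K p_{ij}=p_i^{\opt}$. Because $L=1$, the single position is closed only upon termination, so before iteration $t\le K$ at most $t-1<K$ rays have been removed from $V$, and therefore at least one of the $K$ optimal rays is still available. Applied with $\m{b}=\m{b}^{(t-1)}$, Lemma~\ref{lemma3} supplies such a ray $\tilde{j}$ with $\m{b}^{(t-1)\mt}\m{p}_{\tilde{j}}\le(1-\tfrac1K)f^{t-1}+\tfrac1K\OPT$; the greedy choice $j^*=\arg\min_{j\in V}\m{b}^{(t-1)\mt}\m{p}_j$ is no worse, and since $f^t=\m{1}^\mt(\m{b}^{(t-1)}\odot\m{p}_{j^*})=\m{b}^{(t-1)\mt}\m{p}_{j^*}$ this gives the recurrence $f^t\le(1-\tfrac1K)f^{t-1}+\tfrac1K\OPT$ for $t=1,\dots,K$.

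To finish, subtract $\OPT$ to get $f^t-\OPT\le(1-\tfrac1K)(f^{t-1}-\OPT)$, hence by induction $f^K-\OPT\le(1-\tfrac1K)^K(E-\OPT)$. Using $(1-\tfrac1K)^K\le e^{-1}$ for all $K\ge1$ (take logarithms and apply $\ln(1-x)\le-x$ with $x=1/K$) together with $\OPT\le E$ (the empty selection is feasible and already attains cost $E$), we obtain $f^K\le\OPT+\tfrac1e(E-\OPT)=E\tfrac1e+\OPT(1-\tfrac1e)=\UB(f^K)$.

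I expect the only genuinely delicate point to be the availability argument in the second paragraph: Lemma~\ref{lemma3} only promises that \emph{some} optimal ray meets the per-step bound, and one must be sure that ray has not already been discarded from $V$ — this is exactly where the hypothesis $L=1$ is used (with several positions a position could be closed before its optimal rays are picked, which is what weakens the bound in the general-$L$ analysis). The solution of the linear recurrence and the estimate $(1-1/K)^K\le 1/e$ are routine.
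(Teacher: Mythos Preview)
Your proof is correct and follows the same route as the paper: invoke Lemma~\ref{lemma3} at each greedy step to obtain the recurrence $f^t \le (1-\tfrac1K)f^{t-1} + \tfrac1K\OPT$, iterate it, and finish with $(1-\tfrac1K)^K\le 1/e$. The only difference is cosmetic---you solve the recurrence by the shift $f^t-\OPT\le(1-\tfrac1K)(f^{t-1}-\OPT)$, whereas the paper carries the geometric partial sum $\tfrac1K\sum_{u=0}^{t-1}(1-\tfrac1K)^u$ through an explicit induction before collapsing it to $1-(1-\tfrac1K)^t$---and the availability point you flag is treated in the paper with the same informal ``if an optimal ray was already chosen the situation is only better'' case split.
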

\noindent\emph{Proof:} We prove the upper bound by complete induction. In the beginning no ray is selected, per-voxel loss is $b_i^0 = \epsilon_i$ and the value of the cost function $f^0 = \sum_{i=1}^V b_i^0 = E$. Using Lemma~\ref{lemma3}, we know that there exists ray $j$ such that $\sum_{i=1}^V p_{ij}b_i^0 \leq \sum_{i=1}^V b_i^0(1-\frac{1}{k}) + \frac{\OPT}{K}$, therefore we know that
	\begin{nospacetopflalign}
	&f^1 =\sum_{i=1}^V p_{ij}b_i^0 \leq \sum_{i=1}^V b_i^0\left(1-\frac{1}{K}\right) + \frac{\OPT}{K}= \nonumber \\
	&	  =E \left(1-\frac{1}{K}\right) + \frac{\OPT}{K}. 
	\end{nospacetopflalign}
	Greedy algorithm continues by updating the per-voxel loss $b_i^1 = b_i^0p_{ij}$. In the second iteration there are two possible cases: (i) we have either used the optimal ray in the first iteration, then the situation is better and we know there is $(K-1)$ rays which achieves optimum, or (ii)  we have not selected the optimal ray in the first iteration, therefore we have still $K$ rays which achieves the optimum. Since the cost function reduction in the latter case gives the upper bound on the cost function reduction in the former one, we assume that there is still $k$ optimal rays available, therefore there exists ray $j$ such that
	\begin{eqnarray}
	\hspace{-3cm}f^2 &=&\sum_{i=1}^V p_{ij}b_i^1\leq \sum_{i=1}^V b_i^1\left(1-\frac{1}{k}\right) + \frac{\OPT}{K}\leq \nonumber \\
	&\leq& E \left(1-\frac{1}{K}\right)^2 + \frac{\OPT}{K}\left(\left(1-\frac{1}{K}\right)+1\right). 
	\end{eqnarray}
	We assume that the following holds
	\begin{eqnarray}
	\hspace{-0.3cm}
	f^{t-1} &\leq& E \left(1-\frac{1}{K}\right)^{t-1} + \frac{\OPT}{K}\sum_{u=0}^{t-2}\left(1-\frac{1}{K}\right)^u. \label{eq:assumption}
	\end{eqnarray}
	and prove the inequality for $f^t$. Using the assumption~(\ref{eq:assumption}) and Lemma~\ref{lemma3}, the following inequalities hold
    {\small
	\begin{eqnarray}
	&&\hspace{-0.7cm}f^t \leq \sum_{i=1}^V b_i^{t-1} \left(1-\frac{1}{K}\right) + \frac{\OPT}{K}\leq \nonumber \\
	&&\hspace{-0.7cm}\leq\left[ E\left(1-\frac{1}{K}\right)^{t-1}\!\!\!\!\!\! + \frac{\OPT}{K}\sum_{u=0}^{t-2}\left(1-\frac{1}{K}\right)^u\right]\left(1-\frac{1}{K}\right)+ \frac{\OPT}{K} \nonumber\\
	&&\hspace{-0.7cm} = E\underbrace{\left(1-\frac{1}{K}\right)^{t}}_{\alpha_t^K} + \OPT\underbrace{\frac{1}{K}\sum_{u=0}^{t-1}\left(1-\frac{1}{K}\right)^u}_{\beta_t^K}
	\label{eq:alpha_beta}
	\end{eqnarray}
    }
	Since $\alpha_t^K+\beta_t^K = 1$
    \footnote{
        $\beta_t^K
        = \frac{1}{K} \sum_{u=0}^{t-1}\left(1-\frac{1}{K}\right)^u
        = (1-a) \sum_{u=0}^{t-1} a^u
        = 1-a^t
        = 1 - \left(1-\frac{1}{K}\right)^t
        = 1 - \alpha_t^K$
        for $a = \left(1-\frac{1}{K}\right)$.
    }
    and $\alpha_K = \left(1-\frac{1}{K}\right)^{K} \leq \frac{1}{e}$, the upper bound for cost function of the greedy algorithm  in $K$th iteration is 
	$f^K \leq E \frac{1}{e} + \OPT\left(1-\frac{1}{e}\right)$  \hspace{2cm} $\square$  
%

Theorem~\ref{theorem1} reveals that the approximation ratio of the greedy algorithm $\rho = \frac{f^{K}}{\OPT}$ after $K$ iterations has following upper bound 
{\small
\begin{equation}
\rho \leq \frac{\OPT(\frac{E}{\OPT}\frac{1}{e} + \left(1-\frac{1}{e}\right))}{\OPT} \leq \frac{E}{\LB(\OPT)e} + \left(1-\frac{1}{e}\right) 
\end{equation}
}
We can simply find $\LB(\OPT)$ by considering for each voxel the best $K$ rays independently. 

So far we have assumed that the greedy algorithm chooses only $K$ rays and that all rays are available in all iterations.
Since there are $L$ positions and the greedy algorithm can choose only $K$ rays at each position, some rays may be no longer available when choosing $(K+1)$th ray.
In the worst case possible, the rays from the most promising position will become unavailable.
Since we have not chosen optimal rays we can no longer achieve $\OPT$.
Nevertheless, we can still choose from rays which achieve a new optimum. 


We introduce $\overline{\OPT}_v$ as the optimum achievable after closing $v$ positions. Obviously $\overline{\OPT}_0=\OPT$. Let us assume that, when the first position is closed we cannot lose more than $R_1$, therefore $\overline{\OPT}_1=\OPT+R_1$. Without any additional assumption, $R_1$ could be arbitrarily large. 
We discuss potential assumptions later. Similarly $\overline{\OPT}_2=\OPT+R_1+R_2$, and $\overline{\OPT}_v=\OPT+\sum_{l=1}^vR_l$. The following theorem states the upper bound for $f^{LK}$ as a function of $\overline{\OPT}_v$.
%


\begin{thm}
	Upper bound $\mathit{\UB}(f^{LK})\geq f^{LK}$ of the greedy algorithm after $LK$ iterations is 
	\begin{flalign}
	 \mathit{\UB}(f^{LK}) = E \frac{1}{e} + \sum_{u=0}^{L-1}\gamma_u\overline{\OPT_u},	
	\end{flalign}
	where $\gamma_u = \left(1-\sqrt[L]{\frac{1}{e}}\right)\left(\sqrt[L]{\frac{1}{e}}\right)^{L-1-u}$
\end{thm}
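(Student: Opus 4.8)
\noindent\emph{Proof plan.}
The plan is to run the induction behind Theorem~\ref{theorem1} over $L$ consecutive blocks of $K$ iterations rather than over a single block. Partition the $LK$ iterations of Algorithm~\ref{algo:greedy0} into $B_1,\dots,B_L$, with $B_{u+1}$ consisting of iterations $uK+1,\dots,(u+1)K$. First I would record a counting fact: a position is closed only after its whole budget of $K$ rays has been spent, so at most $u$ positions can have been closed before $B_{u+1}$ starts; hence throughout $B_{u+1}$ the achievable optimum of the simplified model --- the quantity that plays the role of $\OPT$ in Lemma~\ref{lemma3} --- is at most $\overline{\OPT}_u$.

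Next, inside $B_{u+1}$ I would replay the proof of Theorem~\ref{theorem1} with $\overline{\OPT}_u$ substituted for $\OPT$. Lemma~\ref{lemma3} then guarantees at each of the $K$ iterations a ray whose selection scales the running cost by $(1-\tfrac1K)$ and adds $\overline{\OPT}_u/K$, and the greedy choice is at least as good; chaining the block exactly as in Eq.~\eqref{eq:alpha_beta} gives $f^{(u+1)K}\le\bigl(1-\tfrac1K\bigr)^{K}f^{uK}+\bigl(1-(1-\tfrac1K)^{K}\bigr)\overline{\OPT}_u$. Now $\bigl(1-\tfrac1K\bigr)^{K}\le\tfrac1e\le\sqrt[L]{1/e}=:c$, and $\overline{\OPT}_u\le f^{uK}$ because the best value still reachable from the current state by spending the remaining rays cannot exceed the cost already attained; since a convex combination $\lambda f^{uK}+(1-\lambda)\overline{\OPT}_u$ is nondecreasing in $\lambda$ once $\overline{\OPT}_u\le f^{uK}$, we may enlarge the block factor from $(1-\tfrac1K)^{K}$ to $c$ and obtain the clean recursion $f^{(u+1)K}\le c\,f^{uK}+(1-c)\,\overline{\OPT}_u$.

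Finally I would unroll this from $f^{0}=\sum_i\epsilon_i=E$ over $u=0,\dots,L-1$, which gives $f^{LK}\le c^{L}E+(1-c)\sum_{u=0}^{L-1}c^{L-1-u}\overline{\OPT}_u$; since $c^{L}=1/e$ and $(1-c)c^{L-1-u}=\gamma_u$, this is exactly the asserted $\UB(f^{LK})$, and the geometric identity $\sum_{u=0}^{L-1}\gamma_u=1-\tfrac1e$ both serves as a sanity check and recovers Theorem~\ref{theorem1} at $L=1$. I expect the only genuinely delicate points to be the first paragraph's bookkeeping --- one must handle the boundary case in which the $(u+1)$-st position closes during $B_{u+1}$, i.e.\ check that it can only close at the very last iteration of the block, so that $\overline{\OPT}_u$ still bounds the achievable optimum at every choice point --- and the correct reading of $\overline{\OPT}_u$ that makes $\overline{\OPT}_u\le f^{uK}$ hold, which is what turns the crude $(1-\tfrac1K)^K\le 1/e$ of Theorem~\ref{theorem1} into the sharper per-block factor $c=\sqrt[L]{1/e}$ needed here.
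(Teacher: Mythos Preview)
Your argument has a genuine gap in the application of Lemma~\ref{lemma3}. The parameter ``$K$'' in that lemma is the \emph{number of rays that realise the optimum}, not the per-position budget. In the single-position setting of Theorem~\ref{theorem1} these coincide, but here they do not: the quantity $\overline{\OPT}_u$ is the optimum of the \emph{full} remaining problem, achievable only with up to $(L-u)K$ rays spread over the still-open positions. Invoking Lemma~\ref{lemma3} therefore yields a per-step contraction $\bigl(1-\tfrac{1}{(L-u)K}\bigr)$, not $\bigl(1-\tfrac{1}{K}\bigr)$; the paper bounds this conservatively by $\bigl(1-\tfrac{1}{LK}\bigr)$ and works with $\alpha_K^{LK}=\bigl(1-\tfrac{1}{LK}\bigr)^K$ throughout. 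With the correct factor one gets $\alpha_K^{LK}\le (1/e)^{1/L}$ directly from $(\alpha_K^{LK})^L=\alpha_{LK}^{LK}\le 1/e$, and the block recursion $f^{(u+1)K}\le \alpha_K^{LK} f^{uK}+\beta_K^{LK}\,\overline{\OPT}_u$ unrolls to the stated bound without any further manipulation.

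Your attempted repair --- weakening $\bigl(1-\tfrac1K\bigr)^K$ up to $c=\sqrt[L]{1/e}$ via the claim $\overline{\OPT}_u\le f^{uK}$ --- rests on a misreading of $\overline{\OPT}_u$. In the paper $\overline{\OPT}_u=\OPT+\sum_{v\le u}R_v$ is the optimum of the \emph{original} objective $\boldsymbol{\epsilon}^\mt\!\prod_j\m p_j$ with $u$ positions removed, not ``the best value reachable from the current state $\m b^{uK}$''. Since $\overline{\OPT}_u$ increases with $u$ while $f^{uK}$ decreases, there is no reason for $\overline{\OPT}_u\le f^{uK}$ to hold, and your monotonicity-in-$\lambda$ argument collapses. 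Once you apply Lemma~\ref{lemma3} with $LK$ in place of $K$, both this weakening step and the delicate block-boundary bookkeeping you flag become unnecessary: the paper simply keeps ``$LK$ rays achieving $\overline{\OPT}_u$'' as a uniform conservative surrogate across all iterations, and the result falls out.
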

\begin{proof}
	We start from the result~(\ref{eq:alpha_beta}) shown in the proof of Theorem~\ref{theorem1}. Since there is $LK$ rays achieving optimum $\overline{\OPT_0} = \OPT$, the cost function $f^K$ in $K$th iteration is bounded as follows
	\begin{nospaceflalign}
	\mkern-26muf^{K}\leq E\underbrace{\left(1-\frac{1}{LK}\right)^{K}}_{\alpha_{K}^{LK}} + \overline{\OPT_0}\underbrace{\frac{1}{LK}\sum_{u=0}^{K-1}\left(1-\frac{1}{LK}\right)^u}_{\beta_{K}^{LK}}
	\end{nospaceflalign}
	In the $(K+1)$th iteration, there are two possible cases: (i) rays from some position $l$ become not available and there is $K(L-1)$ rays available which can achieve a new optimum which is not higher than $\overline{\OPT_1}$ or (ii) all rays are available and there is still $LK$ rays which achieve $\overline{\OPT}_0=\OPT$. Noticing that the upper bound is increasing in $\overline{\OPT}_0$ and $L$, we can cover both cases by considering there is still $LK$ rays which achieves $\overline{\OPT_1}$, therefore
	\begin{eqnarray}
	&&\hspace{-1.5cm}f^{K+1}\leq (E\alpha_{K}^{LK} + \overline{\OPT_0}\beta_{K}^{LK})(1-\frac{1}{LK}) + \frac{\overline{\OPT_1}}{LK}= \nonumber \\
	&&\hspace{-1.5cm}\;\;\;\;\;\;\;\;\;=E\alpha_{K+1}^{LK} + \overline{\OPT_0}\beta_{K}^{LK}(1-\frac{1}{LK}) + \frac{\overline{\OPT_1}}{LK}
	\end{eqnarray}
	We can now continue up to the iteration $2K$ in which the upper bound is as follows
	\begin{eqnarray}
	&&\hspace{-1.5cm}f^{2K}\leq E\alpha_{2K}^{LK} + \overline{\OPT_0}\beta_{K}^{LK}\alpha_{K}^{LK} + \overline{\OPT_1}\beta_{K}^{LK} 
	\end{eqnarray}
	For $(2K+1)$th iteration the situation is similar as for ${(K+1)}$th iteration. In order to cover both cases, we consider that there is $LK$ rays which achieves $\overline{\OPT_2}$ and continue up to the $3k$th iteration, which yields the following upper bound
	\begin{eqnarray}
	f^{3K}&\leq& E\alpha_{3K}^{LK} + \overline{\OPT_0}\beta_{K}^{LK}\alpha_{2K}^{LK} +  \nonumber\\
	 && +\overline{\OPT_1}\beta_{K}^{LK}\alpha_{K}^{LK}+\overline{\OPT_2}\beta_{K}^{LK}
	\end{eqnarray}
	Finally after $LK$ iterations the upper bound is
	\begin{eqnarray}
	&&\hspace{-1.5cm}f^{LK}\leq E \alpha_{LK}^{LK} + \beta_{K}^{LK}\sum_{u=0}^{L-1}\alpha_{(L-1-u)K}^{LK}\overline{\OPT_u}\leq\nonumber\\
	&&\hspace{-.8cm} \leq E \frac{1}{e} + \sum_{u=0}^{L-1}\left(1-\sqrt[L]{\frac{1}{e}}\right)\left(\sqrt[L]{\frac{1}{e}}\right)^{L-1-u}\!\!\!\!\!\!\!\!\!\!\!\!\!\!\!\overline{\OPT_u}.
	\end{eqnarray}
	The last inequality stems from the fact that $(\alpha_{K}^{LK})^L = \alpha_{LK}^{LK} \leq \frac{1}{e}$ and that $\alpha_{K}^{LK} + \beta_{K}^{LK} = 1$.
\end{proof}

Finally we derive the upper bound of the approximation ratio $\rho = {f^{LK}}/{\OPT}$. 

\begin{thm}
	Upper bound of the approximation ratio is 
	\begin{flalign}
		\rho\geq\frac{E}{\mathsf{\LB}(\OPT)}\frac{1}{e} + \sum_{u=0}^{L-1}\gamma_u \left(1+\frac{\sum_{v=1}^uR_v}{\small{\LB(\OPT)}}\right) 
	\end{flalign}
	where $\LB(\OPT)$ is lower bound of the $\OPT$.
\end{thm}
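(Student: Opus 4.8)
The plan is to obtain the bound as an immediate corollary of the preceding theorem, which already gives $f^{LK}\le\UB(f^{LK})=E\frac{1}{e}+\sum_{u=0}^{L-1}\gamma_u\overline{\OPT_u}$ for the cost $f^{LK}$ attained by the greedy algorithm after $LK$ iterations. Since by definition $\rho=f^{LK}/\OPT$, I would first divide this bound through by $\OPT$ and use $f^{LK}\le\UB(f^{LK})$ to get $\rho\le\frac{E}{\OPT}\frac{1}{e}+\sum_{u=0}^{L-1}\gamma_u\,\frac{\overline{\OPT_u}}{\OPT}$. Everything after this is elementary manipulation to bring the right-hand side into the stated shape.

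The key step is to rewrite each ratio $\overline{\OPT_u}/\OPT$ using $\overline{\OPT_u}=\OPT+\sum_{v=1}^{u}R_v$, so that $\overline{\OPT_u}/\OPT=1+\frac{1}{\OPT}\sum_{v=1}^{u}R_v$, and only \emph{afterwards} to replace the remaining denominators $\OPT$ by the lower bound $\LB(\OPT)$. Because $0<\LB(\OPT)\le\OPT$ and $E\ge 0$, $R_v\ge 0$, one has $\frac{E}{\OPT}\le\frac{E}{\LB(\OPT)}$ and $\frac{1}{\OPT}\sum_{v=1}^u R_v\le\frac{1}{\LB(\OPT)}\sum_{v=1}^u R_v$; since each $\gamma_u=\bigl(1-\sqrt[L]{1/e}\bigr)\bigl(\sqrt[L]{1/e}\bigr)^{L-1-u}\ge 0$ (as $0<\sqrt[L]{1/e}<1$), multiplying by $\gamma_u$ and summing preserves these inequalities, yielding exactly $\rho\le\frac{E}{\LB(\OPT)}\frac{1}{e}+\sum_{u=0}^{L-1}\gamma_u\bigl(1+\frac{\sum_{v=1}^u R_v}{\LB(\OPT)}\bigr)$.

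I would also record, in passing, the three elementary nonnegativity facts used above: $E=\sum_i\epsilon_i\ge 0$ since the $\epsilon_i$ are Bernoulli entropies; $R_v\ge 0$ because closing a position removes rays from $V$ and hence shrinks the feasible set of problem~(\ref{eq:planning-problem}), so the optimum can only grow, i.e.\ $\overline{\OPT}_v\ge\overline{\OPT}_{v-1}$; and $\gamma_u\ge 0$ as noted. (One also needs $\OPT>0$ for $\rho$ to be meaningful; this is implicit throughout.)

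The main obstacle here is more a matter of care than of difficulty: one must extract the exact term $\OPT/\OPT=1$ from $\overline{\OPT_u}/\OPT$ \emph{before} weakening $\OPT$ to $\LB(\OPT)$ in the denominators. Substituting $\LB(\OPT)$ into the numerator $\overline{\OPT_u}$ directly is invalid, since $\LB(\OPT)+\sum_v R_v$ need not upper bound $\overline{\OPT_u}$; and dividing $\UB(f^{LK})$ by $\LB(\OPT)$ uniformly from the start gives a weaker, differently shaped expression. With the substitutions taken in the right order, the rest is routine algebra.
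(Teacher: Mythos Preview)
Your proposal is correct and follows essentially the same approach as the paper's proof: divide the bound $f^{LK}\le E/e+\sum_u\gamma_u\overline{\OPT_u}$ by $\OPT$, substitute $\overline{\OPT_u}=\OPT+\sum_{v=1}^u R_v$ to extract the $1$, and only then weaken the remaining $\OPT$ in the denominators to $\LB(\OPT)$. Your added remarks on the nonnegativity of $E$, $R_v$, and $\gamma_u$ make explicit what the paper leaves implicit, but the argument is the same.
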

\noindent\emph{Proof:}
	\begin{eqnarray}
	&&\hspace{-0.7cm}\rho = \frac{f^{LK}}{\OPT} \leq 	\frac{\UB(f^{LK})}{\OPT} = \frac{E \frac{1}{e} + \sum_{u=1}^{L}\gamma_u \overline{\OPT}_u}{\OPT}= \nonumber\\
	&&\hspace{-0.4cm}= \frac{\OPT(\frac{E}{\OPT}\frac{1}{e} +\sum_{u=1}^{L}\gamma_u \frac{\overline{\OPT}_u}{\OPT})}{\OPT}= \nonumber\\
	&&\hspace{-0.4cm}= \frac{E}{\OPT}\frac{1}{e} + \sum_{u=1}^{L}\gamma_u \frac{\OPT + \sum_{v=1}^uR_v}{\OPT}\leq \\
	&&\hspace{-0.4cm}\leq \frac{E}{\LB(\OPT)}\frac{1}{e} + \sum_{u=0}^{L-1}\gamma_u \left(1+\frac{\sum_{v=1}^uR_v}{\small{\LB(\OPT)}}\right)\nonumber\;\;\;\;\;\;\;\;\;\;\;\;\;\;\square
	\end{eqnarray}

The approximation ratio depends on the $\OPT$, if $\OPT=0$ then $\rho = \infty$, if $\OPT=E$ then $\rho = 1$.  If we make an assumption that each position covers only $\frac{1}{L}$ fraction of voxels, then $R_v\leq \frac{V}{L}$. Figure~\ref{fig:approximation_ratio} shows values of $\LB(\rho)$ for different ratios of $\frac{\OPT}{E}$ for this case.
\begin{figure}[t]
	\begin{center}
		\includegraphics[width=0.55\linewidth]{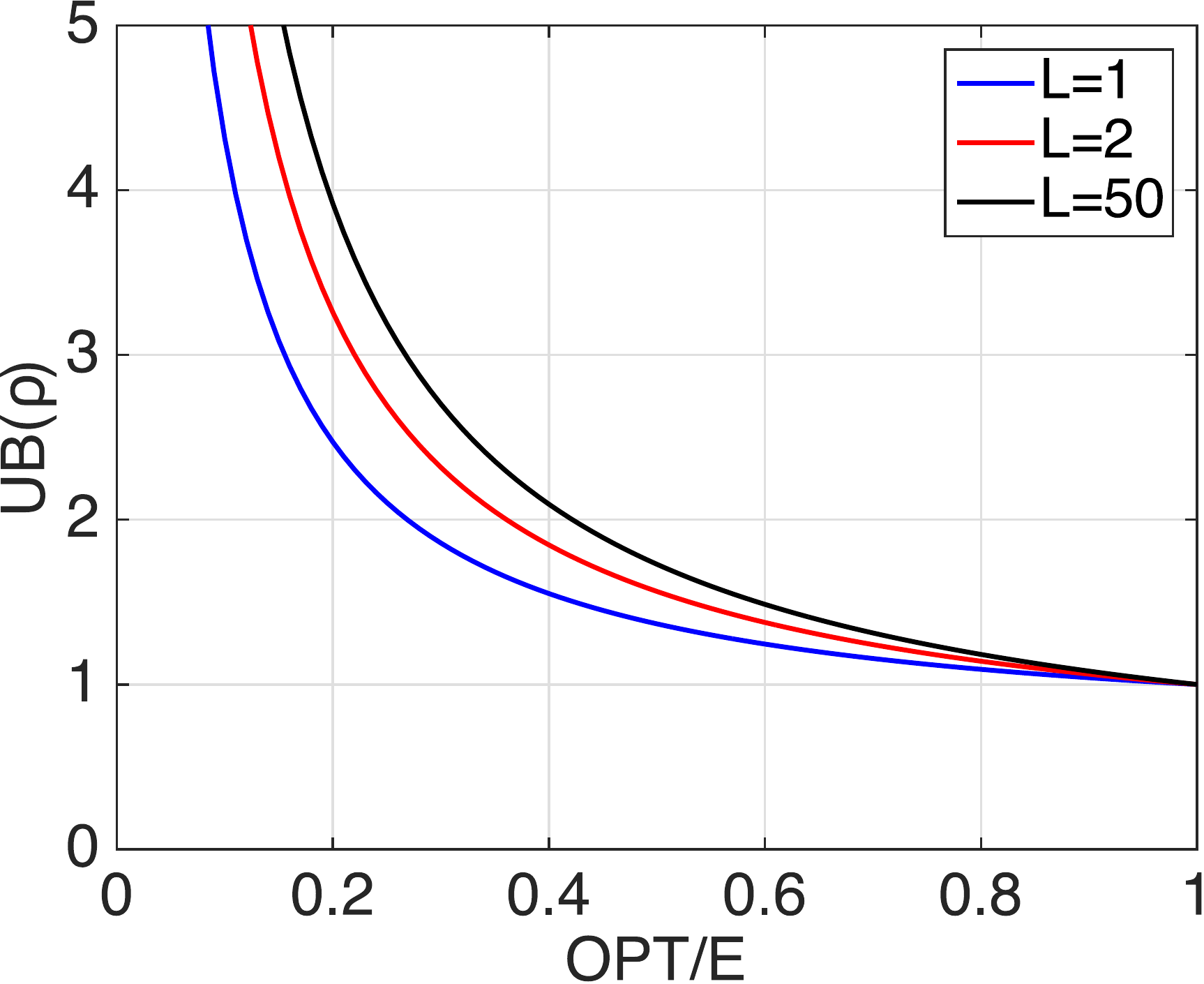}
	\end{center}
	\caption{$\text{\UB}(\rho)$ as a function of $\frac{\OPT}{E}$ ratios with $R_v\leq \frac{V}{L}$.}
	\label{fig:approximation_ratio}
\end{figure}

\subsection{Prioritized greedy planning}\label{sec:faster-greedy}

In practice we observed a significant speed up of the greedy planning (Alg.~\ref{algo:greedy0}) by imposing prioritized search for $\arg\min_j \m{b}^\mt \m{p}_{j}$.
Namely, let us denote $\Delta_j^k$ the decrease of the expected reconstruction error achieved by selecting ray $j$ in iteration $k$,
$
\Delta_j^k = \sum_i (b_i^{k-1} - b_i^{k}) = \sum_i b_i^{k-1}(1 - p_{ij}),
$
and show that it is non-increasing.
For $p_{ij}, p_{ij'} \in [0, 1]$ and $b_i^{k-1} \ge 0$ it follows that $b_i^{k-1}(1-p_{ij}) \ge b_i^{k-1}p_{ij'}(1-p_{ij})$.
Summing the inequalities for all voxels $i$, we get
{\small
\begin{equation}
\Delta_j^k = \sum_i b_i^{k-1}(1 - p_{ij}) \ge \sum_i b_i^{k-1}p_{ij'}(1 - p_{ij}) = \Delta_j^{k+1}
\end{equation}
}
for an arbitrary ray $j'$ selected in iteration $k$.
Note that $\Delta_j^k \ge \Delta_j^{k+a}$ for any $a \ge 1$.

Now, when we search for $j$ maximizing $\Delta_j^k$ in decreasing order of $\Delta_j^{k-a_j}$, $a_j \ge 1\ \forall j$, we can stop once $\Delta_j^k > \Delta_{j'}^{k-a_{j'}}$ for the next ray $j'$ because none of the remaining rays can be better than $j$.
Moreover, we can take advantage of the fact that all the remaining rays including $j$ remained sorted when updating the priority for the next iteration.
The proposed planning is detailed in Alg.~\ref{algo:greedy}.

The number of re-evaluations of $\Delta_j$ in Alg.~\ref{algo:greedy} was approximately $500\times$ smaller than in Alg.~\ref{algo:greedy0}.
Despite the sorting took about a $1/10$ of the computation time, the prioritized planning was about $30\times$ faster and took $0.3\si{s}$ on average using a single-threaded implementation.

\begin{algorithm}
    {\small
        \begin{algorithmic}[1]
            \Require \tabto{1cm} Set of rays $\s{V} = \{1, \ldots, N\}$ at positions $\s{L}$, budget $K,$ voxel costs $\m{b},$ probability vectors $\m{p}_{j}\ \forall j \in \s{V},$ mapping from ray to position $\lambda\colon \s{V} \mapsto \s{L}$
            \State $\s{J}_{l} \gets \emptyset \; \forall {l} \in \s{L}$ \TabComment{4.5cm}{No rays selected}
            \State $\Delta_j \gets \infty \; \forall {j} \in \s{V}$ \TabComment{4.5cm}{Force recompute}
            \State $S \gets (1, \dots, N)$ \TabComment{4.5cm}{Sequence of ray indices, $S(n)$ denotes the $n$th element in the sequence, $S(m{:}n)$ the subsequence from the $m$th to the $n$th element.}
            \While{$S \neq \emptyset$}
            \For{$n \in (1, \dots, |S|)$}
            \State $\Delta_{S(n)} \gets \m{b}^\mt (\m{1} - \m{p}_{S(n)})$
            \If{$n < |S| \land \Delta_{S(n)} \ge \Delta_{S(n+1)}$}
            \State \textbf{break}
            \EndIf
            \EndFor
            \State Sort subsequence $S(1:n)$ s.t. $\Delta_{S(n')} \ge \Delta_{S(n'+1)}$
            \State Merge sorted subsequences $S(1:n-1)$ and $S(n:|S|)$
            \State $j^* \gets S(1), l^* \gets \lambda(j^*)$
            \State $\s{J}_{l^*} \gets \s{J}_{l^*} \cup \{j^*\}$ \TabComment{4.5cm}{Add the best ray}
            \State $\m{b} \gets \m{b} \odot \m{p}_{j^*}$ \TabComment{4.5cm}{Update voxel costs}
            \If{$|\s{J}_{l^*}| = K$}
                \State $S \gets S \setminus \{j : \lambda(j) = l^* \}$  \TabComment{4.5cm}{Close position}
            \Else
                \State $S \gets S \setminus \{j^*\}$ \TabComment{4.5cm}{Remove $j^*$ from $S$}
            \EndIf
            \EndWhile
            \State \Return Selected rays $\s{J}_l$ at every position $l \in \s{L}$
        \end{algorithmic}
    }
    \caption{Prioritized greedy planning}\label{algo:greedy}
\end{algorithm}

\section{Experiments}

\paragraph{Dataset}
All experiments were conducted on selected sequences from categories \emph{City} and \emph{Residential} from the KITTI dataset~\cite{Geiger-2013-IJRR}.
We first brought the point clouds (captured by the Velodyne HDL-64E laser scanner) to a common reference frame using the localization data from the inertial navigation system (OXTS RT 3003 GPS/IMU)
and created the ground-truth voxel maps from these.
The voxels traced from the sensor origin towards each measured point were updated as empty except for the voxels incident with any of the end points which were updated as occupied for each incident end point.
The dynamic objects were mostly removed in the process since the voxels belonging to these objects were also many times updated as empty while moving.
All maps used axis-aligned voxels of edge size $\SI{0.2}{m}$.

For generating the sparse measurements, we consider an SSL sensor with the field of view of $120^{\circ}$ horizontally and $90^{\circ}$ vertically discretized in $160\times120=19200$ directions.
At each position, we select $K=200$ rays and ray-trace in these directions until an occupied voxel is hit or the maximum distance of $48\si{m}$ is reached.
Only the rays which end up hitting an occupied voxel produce valid measurements, as is the case with the time-of-flight sensors.
Local maps $\m{x}_l$ and $\m{y}_l$ contain volume of $64\si{m} \times 64\si{m} \times 6.4\si{m}$ discretized into $320\times320\times32$ voxels.


\subsection{Active 3D mapping}

In this experiment, we used $17$ and $3$ sequences from the \emph{Residential} category for training and validation, respectively, and $13$ sequences from the \emph{City} category for testing.
We evaluate the iterative planning-learning procedure described in Sec.~\ref{sec:mapping}.
For learning the mapping networks, we used learning rate $\alpha = 10^{-3} (1/8)^{\lceil i / 10 \rceil}$ based on epoch number $i$, batch size $1$, and momentum $0.99$.
Networks $\m{h}_{\theta^0}, \ldots, \m{h}_{\theta^3}$ were trained for $20$ epochs.

The ROC curves shown in Fig.~\ref{fig:roc} (left) are computed using ground-truth maps $\m{Y}$ and predicted global occupancy maps $\hat{\m{Y}}$.
The performance of the $\m{h}_{\theta^3}$ network (denoted \emph{Coupled}) significantly outperforms the $\m{h}_{\theta^3}$ network (\emph{Random}), which shows the benefit of the proposed iterative planning-mapping procedure.
Examples of reconstructed global occupancy maps are shown in Fig.~\ref{fig:inouts}. 
Note that the valid measurements covered around $3\%$ of the input voxels.


\begin{figure}
    \begin{center}
  	\includegraphics[width=0.495\linewidth]{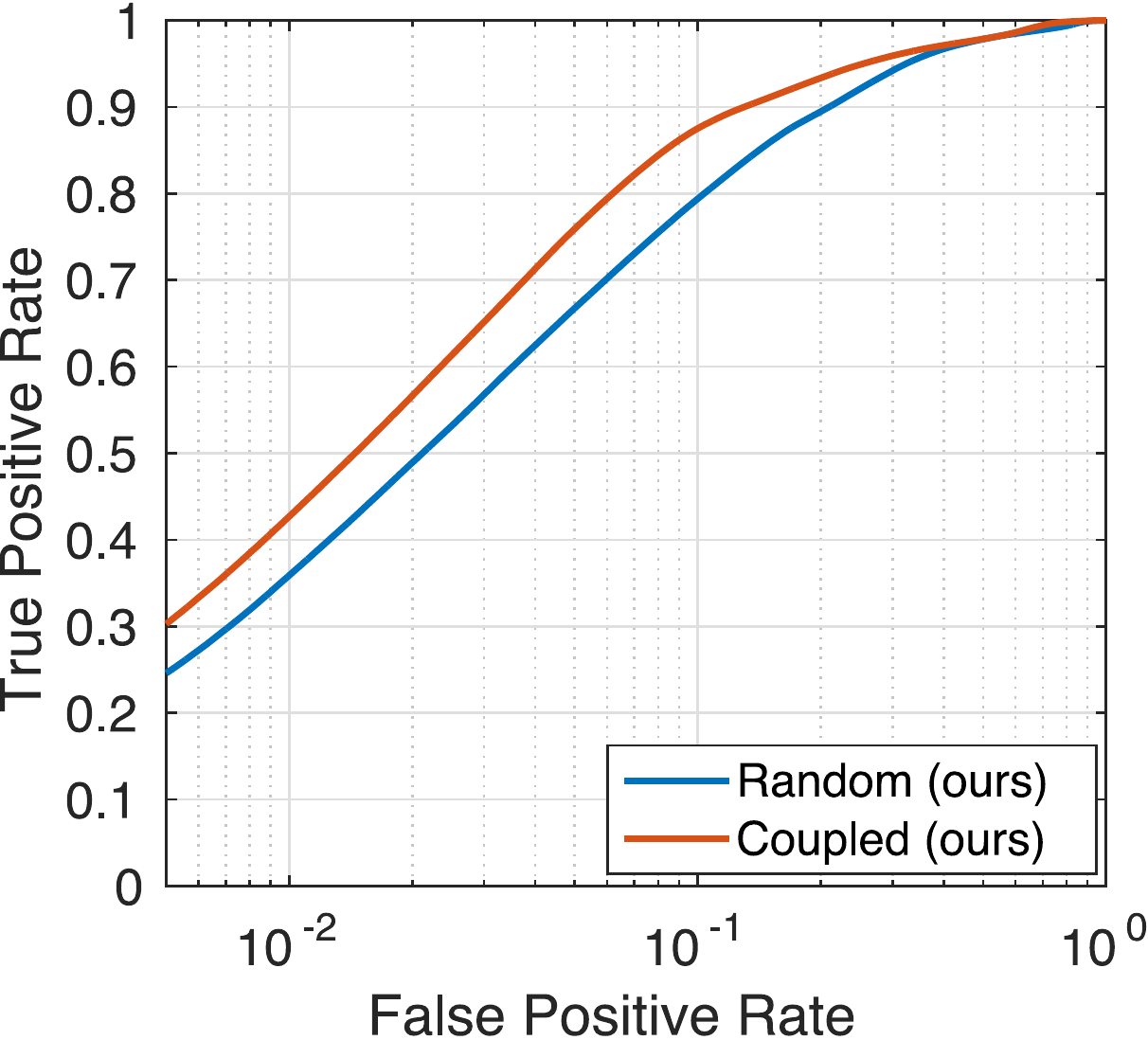}    
    \includegraphics[width=0.495\linewidth]{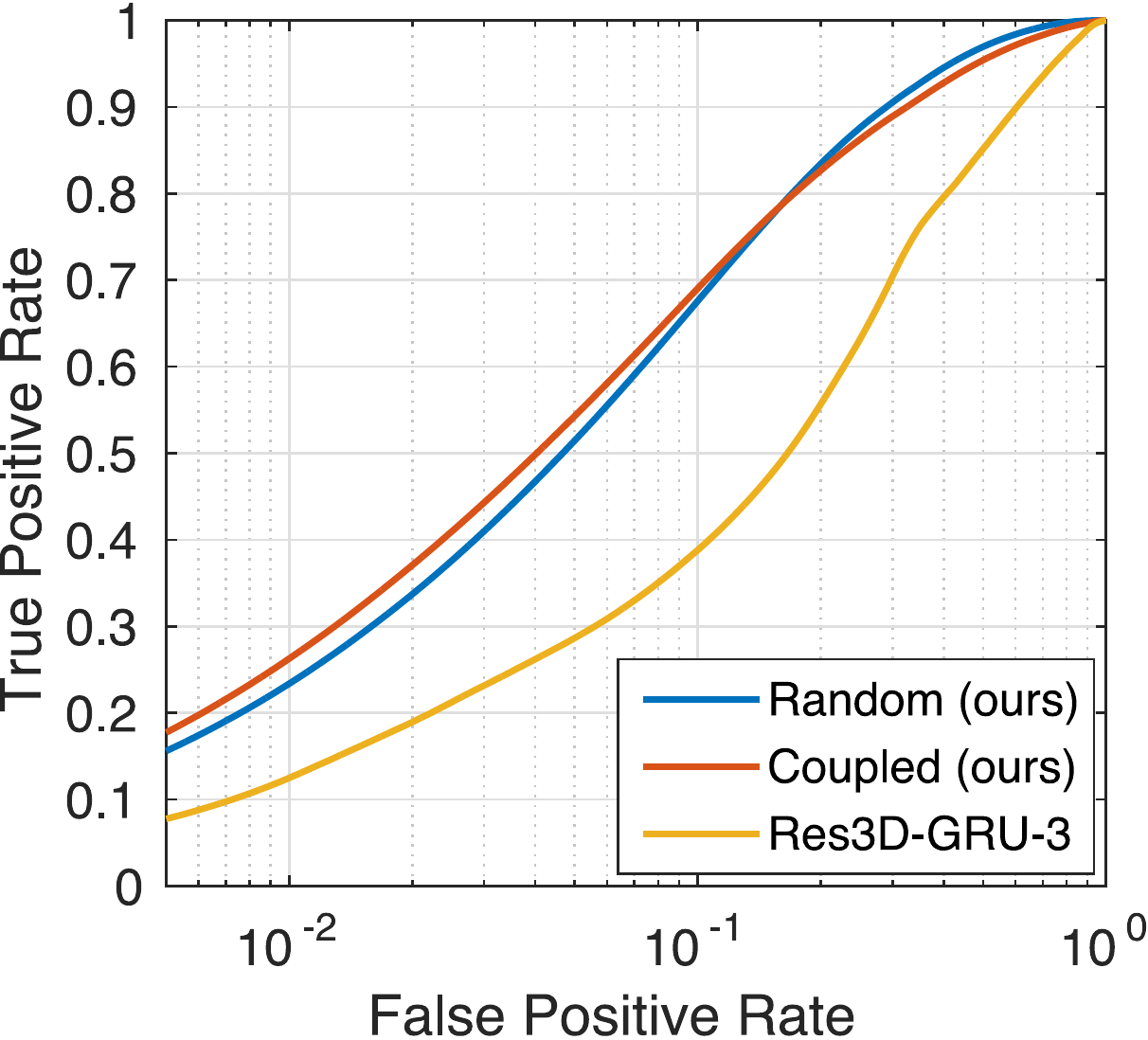}
    \end{center}
    \caption{ROC curves of occupancy prediction from active 3D mapping on test sets.
    \textbf{Left:}
    \emph{Random} denotes the global occupancy $\hat{\m{Y}}$ obtained by using $\m{h}_{\theta^0}$ with random sparse measurements, \emph{Coupled} the occupancy obtained by using $\m{h}_{\theta^3}$ with the prioritized greedy planning.
    The voxels which are more than $1\si{m}$ from what could possibly be measured are excluded, together with the false positives which can be attributed to discretization error (in 1-voxel distance from an occupied voxel).
    \textbf{Right:} \emph{Random} denotes the local occupancy maps $\hat{\m{y}}_l$ obtained by using $\m{h}_{\theta^0}$, \emph{Coupled} the maps obtained by using $\m{h}_{\theta^1}$, and \emph{Res3D-GRU-3} denotes the reconstruction obtained by the network adapted from \cite{Choy-2016-ECCV}.
    }
    \label{fig:roc}
	\label{fig:choy}
\end{figure}

\begin{figure}
    \begin{center}   
        \includegraphics[width=0.99\linewidth]{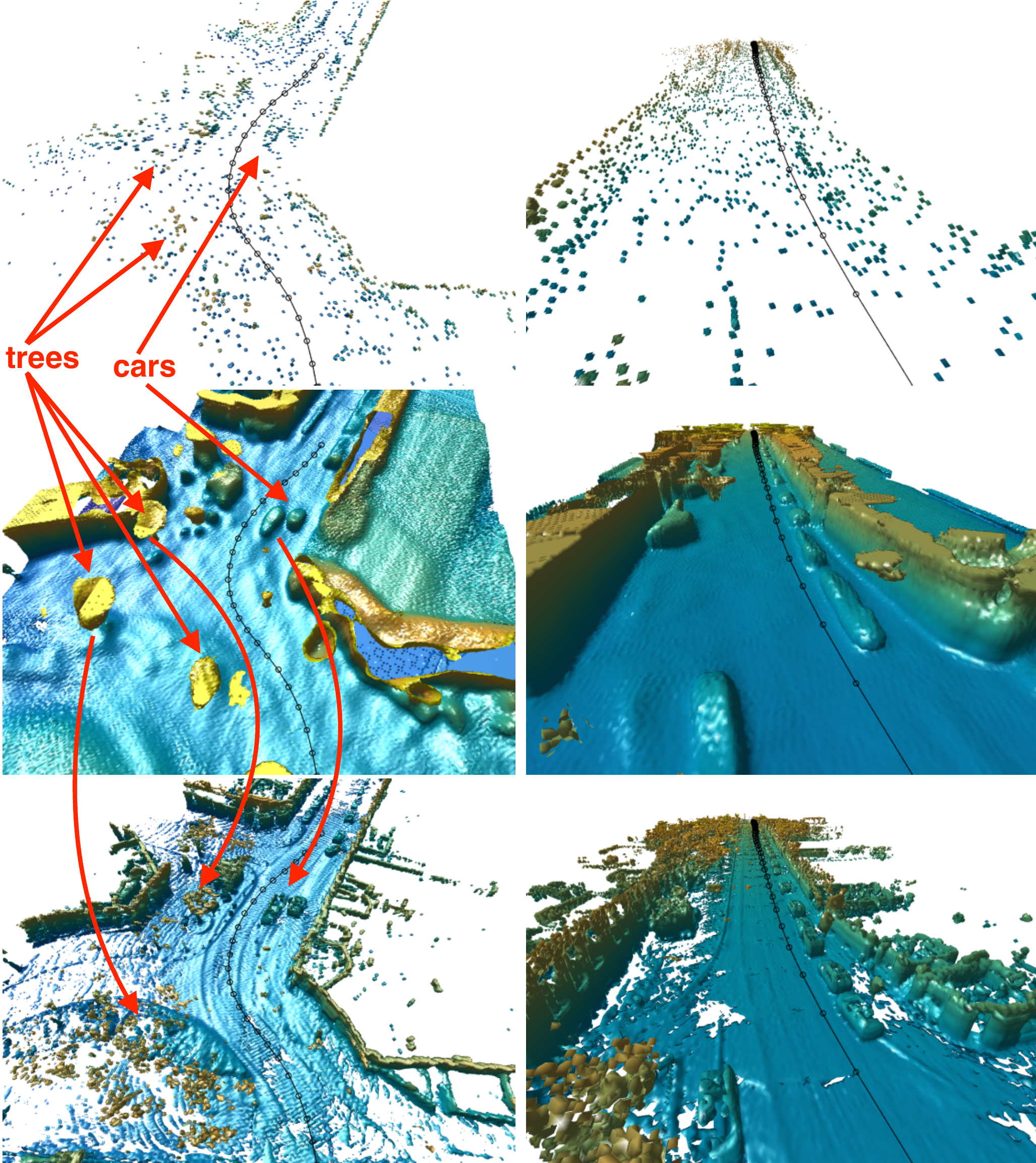}
    \end{center}
    \caption{Examples of global map reconstruction.
    \textbf{Top:} Sparse measurement maps $\m{X}$.
    \textbf{Middle:} Reconstructed occupancy maps $\hat{\m{Y}}$ in form of isosurface.
    \textbf{Bottom:} Ground-truth maps $\m{Y}$.
    The black line denotes trajectory of the car.
    }
    \label{fig:inouts}
\end{figure}

\subsection{Comparison to a recurrent image-based architecture}\label{sec:choy}
We provide a comparison with the image-based reconstruction method of Choy \etal~\cite{Choy-2016-ECCV}.
Namely, we modify
their residual \emph{Res3D-GRU-3} network
to use sparse depth maps of size $160 \times 120$ instead of RGB images.
The sensor pose corresponding to the last received depth map was used for reconstruction.
The number of views were fixed to $5$, with $K = 200$ randomly selected depth-measuring rays in each image.
For this experiment, we used 20 sequences from the \emph{Residential} category---18 for training, 1 for validation and 1 for testing.
Since the \emph{Res3D-GRU-3} architecture is not suited for high-dimensional outputs due to its high memory requirements, we limit the batch size to $1$ and the size of the maps to $128 \times 128 \times 32$, which corresponds to $16 \times 16 \times 4$ recurrent units.
Our mapping network was trained and tested on voxel maps instead of depth images.

The corresponding ROC curves, computed from local maps $\m{y}_l$ and $\hat{\m{y}}_l$, are shown in Fig.~\ref{fig:choy} (right).
Both $\m{h}_{\theta^0}$ and $\m{h}_{\theta^1}$ networks outperforms the \emph{Res3D-GRU-3} network.
We attribute this result mostly to the fact that our method is implicitly provided the known trajectory, while the \emph{Res3D-GRU-3} network is not. 
Another reason may be the ray-voxel mapping which is also known implicitly in our case, compared to \cite{Choy-2016-ECCV}.

\section{Conclusions}
We have proposed a computationally tractable approach for the very high-dimensional active perception task.
The proposed 3D-reconstruction CNN outperforms a state-of-the-art approach by $20\%$ in recall, and it is shown that when learning is coupled with planning, recall increases by additional $8\%$ on the same false positive rate.
The proposed prioritized greedy planning algorithm seems to be a promising direction with respect to on-board reactive control since it is about $30\times$ faster and requires only $1/500$ of ray evaluations compared to a na{\" i}ve greedy solution.
\vspace{-0.5em}
\section*{Acknowledgment}
\vspace{-0.3em}
The research leading to these results has received funding from the European Union under grant agreement FP7-ICT-609763 TRADR and No.~692455 Enable-S3, from the Czech Science Foundation under Project~17-08842S, and from the Grant Agency of the CTU in Prague under Project SGS16/161/OHK3/2T/13.
\newpage

{\small
\bibliographystyle{ieee}
\bibliography{paper}
}

\end{document}